\newtheorem{theorem}{Theorem}
\newtheorem{corollary}[theorem]{Corollary}
\newtheorem{example}[theorem]{Example}
\def\E{\mathbb{E}}
\def\P{\mathbb{P}}
\def\Var{\mathrm{Var}}
\def\cA{\mathcal{A}}
\def\cD{\mathcal{D}}
\def\cM{\mathcal{M}}
\def\TV{\mathrm{TV}}
\def\E{\mathbb{E}}
\def\P{\mathbb{P}}
\def\Var{\mathrm{Var}}
\def\cA{\mathcal{A}}
\def\cD{\mathcal{D}}
\def\cM{\mathcal{M}}
\def\cV{\mathcal{V}}
\def\TV{\mathrm{TV}}
\title{Not-a-Bandit: Provably No-Regret Drafter Selection in Speculative Decoding for LLMs}
\author{Hongyi Liu$^{1}\textsuperscript{\dag}$\thanks{Work done during internship at AWS. \textsuperscript{\dag}Correspondence to: Hongyi L., Youngsuk P., Yu-Xiang W.} , Jiaji Huang$^{2}$, Zhen Jia$^{2}$, Youngsuk Park$^{2}$\textsuperscript{\dag}, Yu-Xiang Wang$^{2,3}$\textsuperscript{\dag} \\
$^{1}$Rice University, $^{2}$Amazon Web Services, $^{3}$University of California, San Diego\\
hongyi.liu@rice.edu, \{jiajihug, zhenj, pyoungsu\}@amazon.com, yuxiangw@ucsd.edu \\
}
\begin{document}

\maketitle

\begin{abstract}

Speculative decoding is widely used in accelerating large language model (LLM) inference. In this work, we focus on the online draft model selection problem in speculative decoding.  We design an algorithm that provably competes with the best draft model in hindsight for each query in terms of either the token acceptance probability or expected acceptance length. In particular, we show that we can accurately evaluate all draft models, instead of only the chosen model without incurring additional queries to the target model, 
which allows us to improve exponentially over the existing bandit-based approach as the number of draft models increases.
Our approach is generically applicable with any speculative decoding methods (single draft, multi-drafts and draft-trees). Moreover, we design system-efficient versions of online learners and demonstrate that the overhead in computation and latency can be substantially reduced. 
We conduct extensive experiments on open-source LLMs and diverse datasets,
demonstrating that our methods substantially outperform the state-of-the-art EAGLE3 and the BanditSpec baseline in a variety of domains where specialized domain-expert drafters are available,
especially when long reasoning chains are required. Code available \href{https://github.com/aladinggit/hedgespec.git}{here}.

\end{abstract}
\section{Introduction}

Speculative decoding \citep{chen2023accelerating,sun2023spectr,li2025eagle3} is widely adopted for accelerating large language models (LLMs). It uses a smaller \emph{surrogate model}---referred to as a \emph{draft model} or simply a \emph{drafter}---to \emph{predict} the sequence that a larger target model would generate. These predictions are then \emph{verified} in parallel by the target model. When the drafters make correct guesses, a single expensive target pass is leveraged to produce multiple tokens, reducing per-token latency.

However, a single drafter may perform well on certain tasks but fail badly on others, leading to inconsistent quality of service and long-tail latency for certain queries. For example, a retrieval-based drafter works well when outputs closely match the input but falters elsewhere~\citep{hou2025banditspec}. Similarly, domain-specific drafters (e.g., for code, scientific writing, or summarization) excel in their own fields yet perform poorly outside them~\citep{liu2023online,kim2024unified, yi2024towards}. This raises a natural question: given access to multiple candidate drafters, how can we dynamically select the best one for each incoming query? Formally, we study the problem of \textbf{online drafter selection}:
\begin{center}
\textit{Given a pool of $N$ drafters, can we design an algorithm whose performance nearly matches that of adopting the best drafter in hindsight, for every user query?}
\end{center}

This problem was originally posed in MetaSD~\citep{kim2024unified}, who framed it as a \emph{multi-armed bandit} task. Their method, together with the more recent \textsc{BanditSpec}~\citep{hou2025banditspec}, balances \emph{exploration} (trying different drafters) and \emph{exploitation} (using the empirically best drafter).

In this paper, we make the surprising observation that \emph{exploration is unnecessary}. By carefully leveraging the structure of speculative decoding, we show that it is possible to efficiently compute feedback for \emph{all} drafters---\emph{not just the one selected}---without incurring additional calls to the target model. This transforms the problem from a bandit setting into a \emph{full-information online learning} problem. Building on this insight, we introduce \textsc{HedgeSpec}, which uses algorithms such as Hedge \citep{littlestone1994weighted,vovk1995game} or NormalHedge \citep{chaudhuri2009parameter} to identify the best drafter \emph{exponentially faster} in $N$ than bandit-based approaches.

\begin{figure*}[t]
    \centering
    \includegraphics[width=\textwidth]{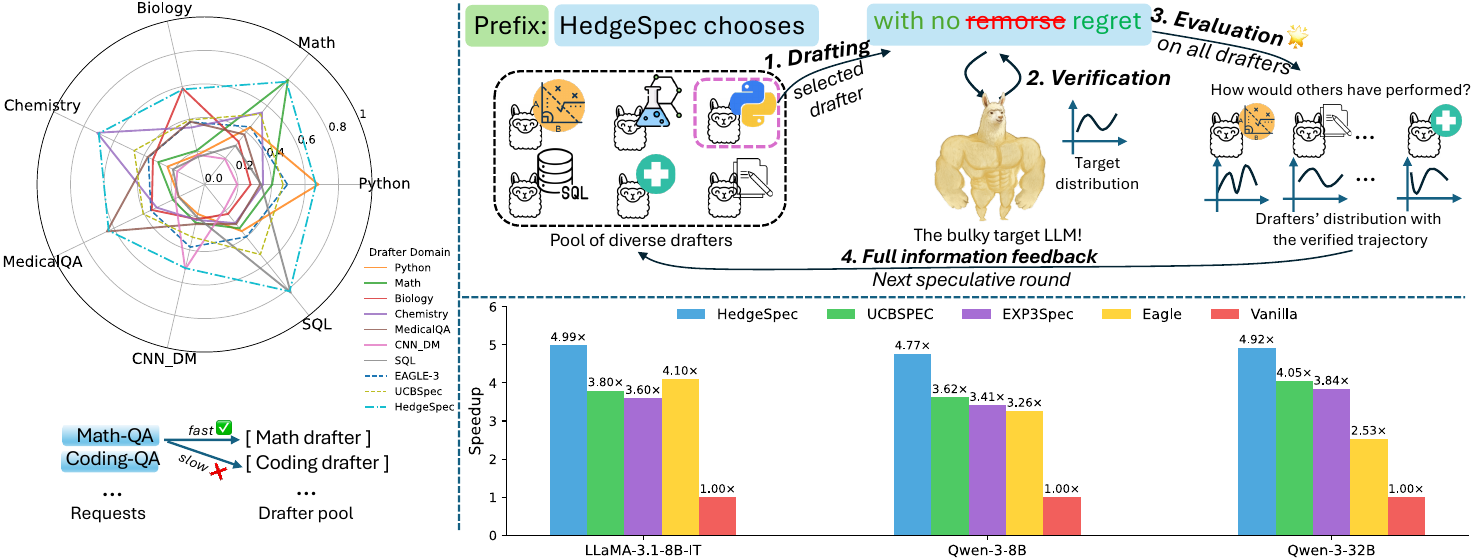}
    \caption{Overall workflow of HedgeSpec. \textbf{Left}: Acceptance rates across domains for Qwen-3-8B highlight each drafter’s strong in-domain performance but steep decline outside its expertise, posing challenges for effective drafter selection. HedgeSpec achieves performance close to that of the best-performing expert. \textbf{Right-top}: Workflow of HedgeSpec. During the verification phase, a lightweight evaluation collects panoramic feedback from drafters by estimating the distribution gap between each drafter and the target. \textbf{Right-bottom}: Speedup ratio across baselines. HedgeSpec benefits from full-information feedback, achieving the highest speedup ratios.}
    \label{fig:teaser}
\end{figure*}

Our key contributions are as follows:

\begin{itemize}[leftmargin=*]
    \item \textbf{Full-information framework for multi-drafter decoding:} We designed a novel method that accurately evaluates all drafters  --- not only the one that is chosen --- with very low overhead, hence enabling full-information online drafter selection.
    \item \textbf{Theoretical guarantees:} We formulate and analyze the drafter selection problem under both acceptance probability and expected acceptance length objectives, establishing \emph{no-regret guarantees}.
    \item \textbf{Empirical validation:} We conduct extensive experiments on LLaMA-3.1-8B-IT and Qwen-3-8B/32B reasoning model each with seven drafters, demonstrating that \textsc{HedgeSpec} consistently outperforms EAGLE (up to 83.7\% token/s gain in single domain and up to 46.1\% on average) and bandit baselines (up to 49\% MAT gain) in both acceptance rate and reduced per-token latency.
\end{itemize}

\textbf{Related work and novelty.} To the best of our knowledge, we are the first to show that full-information online learning is possible for drafter selection. Prior art \citep{hou2025banditspec} modeled the problem as a bandit problem. HedgeSpec could work with any given set of drafters on any speculative decoding method, thus making our contribution orthogonal to methodological innovation in speculative sampling and drafter curation \citep{sun2023spectr,cai2024medusa,li2024eagle,li2025eagle3}.

While we do not invent any new online learning algorithm, it is highly nontrivial to apply existing algorithms correctly to this problem. The core of our innovation is an off-policy estimator that returns the correct acceptance length \emph{in expectation}, as well as to address a subtle \emph{censoring issue} using learning from delayed feedback \citep{joulani2013online}. 

On the system front, we designed a practical version of HedgeSpec that carefully balances the computational/latency cost of the evaluation and the statistical efficiency. As for the evaluation, we curated twenty-one drafters: seven dedicated to each target model, with every drafter specialized in a particular domain. We demonstrated that HedgeSpec with these drafters (a collection of specialists) outperforms the state-of-the-art EAGLE3 (a strong generalist) by a sizable margin.

\vspace{-1em}
\section{Preliminaries on speculative decoding}

\vspace{-0.5em}
\subsection{Symbols and Notation}
\vspace{-0.5em}

\label{sec:notation}
We denote the vocabulary set by $\cV$, and a generated sequence by $x_{1:T}$, where each token $x_t \in \cV$. The target language model defines a probability distribution $p(\cdot)$, while the draft (or surrogate) model defines $q(\cdot)$. A speculative decoding method $\cM_q$ uses $q$ to generate $K$ speculative tokens $\hat{x}_{t+1:t+K}$ at chunk onset index $t_h$, which are then verified in parallel using $p$. We use $i$ to index multiple draft models $q_i$, $t$ for the global token position, $h$ for chunk indices, and $k \in [K]$ for token positions within a chunk. The token acceptance probability given that all previous tokens are correct is denoted by $\gamma_t[i]$. We adopt standard probability notation $p(\cdot)$, conditional probability $p(\cdot \mid \cdot)$, and conditional expectation $\E[\cdot \mid \cdot]$, with all probabilities assumed to be discrete.

\vspace{-1em}
\subsection{Basic speculative decoding method} 
The basic speculative decoding algorithm proceeds as follows:

\fbox{
\begin{minipage}{1.0\textwidth}
\begin{enumerate}[leftmargin=*]
\item Generate a sequence of $K$ draft tokens $\hat{x}_{t+1:t+K}$ using a draft model  $q$, i.e., $\hat{x}_{t+1:t+K}\sim q(x_{t+1:t+K}| x_{\leq t})$
\item For the target model, compute $p(x_{t+1} | x_{\leq t})$, $p(x_{t+2} | x_{\leq t+1}), ..., p(x_{t+K+1} | x_{\leq t+K})$ in parallel.
\item For $k=1,2,...,K$, if $Z_k\sim \mathrm{Uniform}([0,1])$, if $Z_k < \frac{p(\hat{x}_{t+k}|x_{\leq t+k-1})}{q(\hat{x}_{t+k}|x_{\leq t+k-1})}$, assign $x_{t+k} \leftarrow\hat{x}_{t+k}$ and continue; else draw $x_{t+k}\sim p_{\mathrm{res}}(x) \propto \max\{0, p(x|x_{\leq t+k-1})- q(x|x_{\leq t+k-1}) \}$ and break. 
\item If all $K$ tokens pass, assign $x_{t+K+1}\sim p(x_{t+K+1} | x_{\leq t+K})$.
\end{enumerate}
\end{minipage}
}

\begin{theorem}[{\citealp[Theorem~3.5]{leviathan2023fast}}]\label{thm:spec_decoding}
    The samples generated from the speculative decoding algorithm with any draft model $q$ are drawn from the same distribution as $p$.  The acceptance probability of the above algorithm is $\sum_{x\in\cV} \min\{p(x),q(x)\} = 1- \TV(p,q)$ where $\TV(p,q)=\frac{1}{2} \sum_{x\in \cV} |p(x) - q(x)|$ denotes the \emph{total variation} distance of two probability distributions.
\end{theorem}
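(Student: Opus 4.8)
The plan is to prove the two claims separately: first the distributional correctness (samples are drawn from $p$), then the acceptance probability formula. Both reduce to a per-token argument, since the algorithm processes tokens $k=1,2,\dots,K$ sequentially and each step conditions only on the previously accepted prefix $x_{\leq t+k-1}$; thus it suffices to fix a context and analyze a single speculative step, writing $p(\cdot)$ and $q(\cdot)$ for the relevant conditional distributions over $\cV$.

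For the distributional claim, fix a context and consider the random token $X$ produced by one step (either the accepted draft token $\hat{x}$, or a fresh draw from $p_{\mathrm{res}}$ upon rejection). First I would compute, for each $x\in\cV$, the probability that the draft proposes $x$ and it is accepted: that is $q(x)\cdot\Pr[Z < p(x)/q(x)] = q(x)\cdot\min\{1,p(x)/q(x)\} = \min\{p(x),q(x)\}$. Next, the total rejection probability is $1-\sum_{x}\min\{p(x),q(x)\} = \sum_x \max\{0,p(x)-q(x)\}$, and conditional on rejection the output is drawn from $p_{\mathrm{res}}(x)\propto\max\{0,p(x)-q(x)\}$, whose normalizing constant is exactly that same rejection probability. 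Therefore $\Pr[X=x] = \min\{p(x),q(x)\} + \max\{0,p(x)-q(x)\} = p(x)$, using the elementary identity $\min\{a,b\}+\max\{0,a-b\}=a$. This shows each emitted token has the correct conditional law; chaining over $k$ and invoking the fact that step 4 draws the bonus token $x_{t+K+1}$ directly from $p$ gives the full sequence claim by induction on position.

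For the acceptance probability, the per-step computation above already shows $\Pr[\text{token accepted}] = \sum_{x\in\cV}\min\{p(x),q(x)\}$. It remains to convert this to the total-variation form: using $\min\{p(x),q(x)\} = \tfrac12\bigl(p(x)+q(x)-|p(x)-q(x)|\bigr)$ and summing over $x$, the $p$ and $q$ terms each contribute $1$, giving $\sum_x\min\{p(x),q(x)\} = 1 - \tfrac12\sum_x|p(x)-q(x)| = 1 - \TV(p,q)$, which is the stated formula.

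The argument is essentially a sequence of elementary identities, so there is no deep obstacle; the only place requiring care is the bookkeeping that makes the per-step analysis legitimately compose into a statement about the joint distribution of $x_{1:T}$ — one must be explicit that at step $k$ the relevant distributions are the \emph{conditionals} given the already-fixed prefix, that the ``break'' on first rejection does not bias the conditional law of the emitted token (it only truncates the chunk length), and that the verification draws $Z_k$ are independent of everything else. Handling this cleanly via induction on the token index, with the inductive hypothesis ``$x_{1:s}$ is distributed as the first $s$ tokens under $p$,'' is the one structural step worth stating carefully rather than the real difficulty.
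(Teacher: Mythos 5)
Your proof is correct: the per-token rejection-sampling computation, the identity $\min\{p,q\}+\max\{0,p-q\}=p$, and the conversion $\sum_x\min\{p(x),q(x)\}=1-\TV(p,q)$ are exactly the standard argument. The paper does not prove this theorem itself but cites it from \citet{leviathan2023fast} (Theorem~3.5), and your argument is essentially the same as the proof given there, so there is nothing further to reconcile.
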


\subsection{EAGLE and draft tree}

The EAGLE family is the most widely deployed speculative decoding models~\citep{li2024eagle,li2024eagle2,li2025eagle3}. EAGLE-3 introduced multi-layer feature fusion with a training-time test mechanism. EAGLE-2 expands the draft tree with high-confidence tokens and prunes low-probability branches, improving efficiency by increasing the likelihood that more tokens are accepted per cycle. 
Specifically,
instead of sampling from $q^{1:K}$ recursively like a language model, it generates a deterministic (\footnote{It could also be randomized, but EAGLE--2 implementation focused on growing a deterministic greedy tree.}) draft-tree with Depth $K$ and branching factor $L$. 
From the root node ($x_{\leq t}$), EAGLE model $q$ generates $L$ children as possible choice of $x_{t+1}$. These are chosen to be the tokens with $L$ largest probabilities according to $q^1(\cdot|x_{\leq t})$. Then from each child,  another $L$ descendants (subsequent tokens) are generated by sorting $q^2(\cdot|x_{\leq t+1})$.  Different from speculative decoding, EAGLE drafters are not sensitive to the actual numerical values in $q$. Instead, only the relative ranking of the tokens matter. \footnote{The dependence on the values of $q$ becomes relevant when the \emph{dynamic pruning} approach from EAGLE-2 is used, but it does not change the temperature-invariance of the original draft tree.}

\begin{theorem}\label{thm:eagle}
EAGLE with any draft model $q$ returns samples from the correct target distribution $p$.  Moreover, the probability of accepting a token at step $k$ given all previously accepted tokens is $\sum_{v\in \mathrm{Top}_L(q(\cdot|x_{\leq t+k-1}))}p(v | x_{\leq t+k-1} )$.  If dynamic pruning is used, then the candidate set $\mathrm{Top}_L(q(\cdot|x_{\leq t+k-1}))$ should be replaced with the pruned set of descendants of $x_{\leq t+k-1}$.
\end{theorem}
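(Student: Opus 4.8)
The plan is to reduce the correctness claim to Theorem~\ref{thm:spec_decoding} by viewing the EAGLE draft tree as a structured multi-round version of the basic speculative decoding verification, and then to compute the per-step acceptance probability by direct enumeration. First I would fix a position $x_{\leq t+k-1}$ along the path being verified and recall that EAGLE's verification at step $k$ accepts a child token $v$ of $x_{\leq t+k-1}$ only if $v$ lies in the candidate set $\mathcal{C} = \mathrm{Top}_L(q(\cdot \mid x_{\leq t+k-1}))$ (or, under dynamic pruning, the surviving descendant set), and the acceptance/resampling step is carried out with respect to the target $p$. The key structural observation is that, because EAGLE only uses the \emph{ranking} induced by $q$ and not its numerical values, the effective draft proposal at step $k$ is the uniform-over-nothing object; what actually matters is that the verification scheme is a valid rejection-sampling scheme against $p$ restricted to the event ``next token $\in \mathcal{C}$,'' with an explicit residual distribution on $\mathcal{V}\setminus\mathcal{C}$. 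I would make this precise by writing the acceptance event at step $k$ as $\{x_{t+k}\in\mathcal{C}\}$ and showing that conditional on acceptance the token is distributed as $p(\cdot \mid x_{\leq t+k-1})$ truncated to $\mathcal{C}$ and renormalized, while conditional on rejection the token is drawn from $p$ renormalized on $\mathcal{V}\setminus\mathcal{C}$; mixing these two cases with the correct weights recovers exactly $p(\cdot \mid x_{\leq t+k-1})$, which is the inductive step needed to conclude the whole returned sequence is a sample from $p$.

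For the acceptance-probability formula, the computation is then immediate: the probability that step $k$ accepts, conditioned on all previous tokens along the path having been accepted, is the probability that a fresh sample from $p(\cdot\mid x_{\leq t+k-1})$ falls in the candidate set, namely $\sum_{v\in\mathcal{C}} p(v\mid x_{\leq t+k-1})$. For the unpruned tree $\mathcal{C} = \mathrm{Top}_L(q(\cdot\mid x_{\leq t+k-1}))$, giving the stated expression; under dynamic pruning one simply substitutes the pruned descendant set for $\mathcal{C}$, and nothing else in the argument changes because the verification rule depends on $\mathcal{C}$ only through its identity as a subset of $\mathcal{V}$. I would present the induction over $k$ carefully, peeling off one level of the tree at a time and using the inductive hypothesis that the prefix $x_{\leq t+k-1}$ is already a valid sample from $p$, so that the conditional law at the next level is exactly $p(\cdot \mid x_{\leq t+k-1})$.

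The main obstacle I anticipate is handling the tree structure rigorously rather than the single-path intuition: EAGLE verifies an entire depth-$K$, branching-factor-$L$ tree in one target forward pass, so one must argue that selecting, among all root-to-node paths, the longest prefix that is ``accepted'' level-by-level is equivalent to the sequential accept/resample procedure described above, and in particular that the parallel verification does not bias the distribution of the first rejected token. The cleanest way around this is to observe that the acceptance test at each node depends only on that node's parent context and the target probabilities there, so the parallel evaluation is just a bookkeeping device for what is logically a sequential chain of independent rejection-sampling steps; once this equivalence is stated, correctness follows from Theorem~\ref{thm:spec_decoding} applied level-by-level (with the draft distribution replaced by its rank-induced candidate set), and the probability formula follows from the elementary enumeration above. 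A secondary subtlety worth a remark is the temperature-invariance claim: since the candidate set $\mathrm{Top}_L(q(\cdot\mid\cdot))$ is unchanged under any monotone reparametrization of $q$ (in particular temperature scaling), both the correctness and the acceptance-probability formula are manifestly independent of the temperature used to define $q$, except when EAGLE-2 dynamic pruning reintroduces a dependence through the numerical confidences used to prune — which is exactly the caveat flagged in the footnote.
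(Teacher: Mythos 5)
Your proposal is correct and follows essentially the same argument as the paper, whose proof is just a two-sentence version of your key observation: the children at node $x_{\leq t+k-1}$ are $\mathrm{Top}_L(q(\cdot|x_{\leq t+k-1}))$ (or the pruned set), a token is accepted exactly when the target's sample lands in that candidate set, and since the realized token is always a draw from $p$ regardless of acceptance, correctness and the acceptance-probability formula follow immediately. Your additional care about the tree-versus-sequential-path equivalence and the temperature-invariance remark are sound elaborations of the same idea rather than a different route.
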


We defer the proof of Theorem~\ref{thm:eagle} to Appendix~\ref{theorem2}. 
All references to EAGLE in our paper refer to EAGLE-3 unless specified. We defer more related work discussion regarding speculative decoding, adaptive drafting and online algorithms in Appendix~\ref{related}.

\subsection{Performance metrics for speculative drafter evaluation}

\label{sec:performance_measures}

We use two key metrics to evaluate a speculative decoding method $\cM_{q,p}$: the \emph{Expected Token Acceptance Probability (ETAP)} and the \emph{Expected Acceptance Length (EAL)}. These capture complementary notions of per-token accuracy and end-to-end efficiency.

\paragraph{Token Acceptance Probability (TAP).}  
At decoding step $t$, given a prefix $x_{<t}$, the drafter $q(\cdot)$ generates a speculative token $\hat{x}_t$. The token is \emph{accepted} if it matches the target model $p(\cdot)$:
\[
\gamma_t = \Pr_{\cM_{q,p}}[\hat{x}_t = x_t \mid x_{<t}].
\] 
The \textbf{ETAP} is the expectation of this probability over a distribution of prefixes $\cD$:
\[
\mathrm{ETAP}(\cM_{q,p},\cD) = \E_{x_{<t} \sim \cD}[\gamma_t].
\]
ETAP measures the average per-token reliability of the drafter.

\paragraph{Expected Acceptance Length (EAL).}  
Speculative decoding drafts $K$ tokens per chunk starting at index $t_h$, which are then verified by the target model. The \emph{acceptance length} $\mathrm{AcceptLength}(x_{\leq t_h})$ is the number of consecutive tokens accepted before the first mismatch. Even given a fixed prefix $x_{\leq t_h}$, $\mathrm{AcceptLength}$ remains a random variable because $\cM_{q,p}$ often involves sampling. The \textbf{EAL} is:
\[
\mathrm{EAL}(\cM_{q,p},\cD) = \E_{x_{\leq t_h} \sim \cD}\big[\E_{\cM_{q,p}}[\mathrm{AcceptLength}(x_{\leq t_h}) \mid x_{\leq t_h}]\big].
\]
EAL reflects how many tokens are accepted per chunk and thus directly determines decoding efficiency: higher EAL means fewer calls to the target model,
reducing time per token. We will later show how these two metrics are used to evaluate drafters.

\paragraph{The Role of the Prefix Distribution.}  
Here prefix $x_{\leq t_h}$ refers not only to the initial prompt but also to the generated tokens that extend it up to step $t_h$. It is inherently random, determined jointly by the prompt, the stochastic rollout of the target model $p(\cdot)$, and the ``tempo'' of speculative decoding (i.e., where mismatches occur and chunks restart). The distribution $\cD$ over prefixes provides a useful abstraction: it encapsulates all these factors into a single probabilistic view, allowing us to evaluate methods without conditioning on specific prompts or model trajectories.

\section{HedgeSpec: Method and Theory}

We present HedgeSpec, a full-information online learning framework for drafter selection. By leveraging lightweight evaluation for panoramic feedback, HedgeSpec adaptively identifies most effective drafters, improving acceptance rates and end-to-end efficiency.

\subsection{A No-Regret online learning approach to Draft Model Selection}

We are given $N$ draft models $q_1,..., q_N$. At every time $t$, we decide which draft model to use to generate the next draft tokens. Our goal is to compete favorably with the best draft model that gives us either \textbf{(a)} the highest \textbf{acceptance probability} overall or \textbf{(b)} the highest \textbf{expected accepted length} per chunk. In this section, we will formulate this problem as a no-regret online learning problem, design appropriate loss functions that align with the two optimization objectives above, and develop algorithms that come with provable guarantees. 

The performance guarantee is stated as a regret:
$$
\mathrm{Regret}_T =  \sum_{t=1}^T f_t[i_t]  - \min_{i^*\in[N]}\sum_{t=1}^T  f_t[i^*]
$$
where $f_t$ is the loss function at time $t$ determined by how well a draft model $q$ can perform at time $t$. For example, $f_t$ can measure the probability of not accepting tokens at time $t$ or how far the expected accepted length is from $K+1$ where $K$ is the depth of the drafts.

\subsection{Full-information-evaluation: HedgeSpec is not a bandit}

Our main algorithmic idea is to add an evaluation phase between token verification and drafting. This evaluation phase was introduced in the BanditSpec paper. Differently, our work is motivated by that we can get feedback for all draft models, i.e., the full-information feedback, rather than only the draft model that we choose to play, i.e., the Bandit-feedback model. 
Adapting to different drafters changes generation speed,
but thanks to Theorem~\ref{thm:spec_decoding} and~\ref{thm:eagle}, such adaptation does not change the distribution of the output tokens. 
Moreover, as it will become clear, the additional evaluation on drafters does not require extra queries to the expensive target model, and can be carried out in parallel.

How does this evaluation phase work? A naive way is to roll out each drafter explicitly against the target, which is costly and infeasible. Instead, our key idea is that a single verified trajectory from the target can serve as counterfactual evidence for evaluating all drafters. Concretely, after the chunk of new tokens $x_{t+1:t+k}$ are verified for $q_{i_t}$, we prefill them into $q_{i}$ for each of the other $i\in[N]/\{i_t\}$.
This yields feedback from the trajectory, summarized by a sequence of feedback vectors:
$$\gamma_{t}[i] :=   \P_{i}[x_{t}\text{ is accepted} | x_{\leq t-1}] \text{ for }i \in [N], t\in [T]$$

For each method of speculative decoding, the acceptance probability is calculated differently. For example, for the standard speculative decoding with a single draft, $\gamma_{j,i}= 1- \TV[p( \cdot| x_{\leq t+j-1} ), q_i(=\cdot | x_{\leq t+j-1} )]$ by Theorem~\ref{thm:spec_decoding}.  For EAGLE's Greedy-Draft Tree approach $\gamma_{j,i}$ is the total probability of children nodes of parent $x_{\leq t+j-1}$ on the draft tree (constructed and pruned using $q_i$ in Theorem~\ref{thm:eagle}). 

Once these feedbacks are collected, they form the basis for computing losses for each drafter, which will then be used to update the drafter-selection strategy. Crucially, our first result (Theorem~\ref{thm:unbiased_len_estimate}) shows that the acceptance probabilities derived from the above verified trajectory are sufficient to construct an unbiased estimator of the acceptance length for any drafter.

\begin{theorem}\label{thm:unbiased_len_estimate}
    Let $\cM$ be a speculative decoding method and $K$ be the depth of its drafts. We write $x_{t+1:t+K}\sim p(\cdot | x_{\leq t})$ where $p$ refers to the target model.  
 We denote $\gamma_k:= \P_\cM[x_{t+k}\text{ is accepted} | x_{t+1:t+k}\text{ are accepted},  x_{\leq t+k-1}]$. Also, define $\gamma_{K+1} = 0$ for notational convenience. The ``one-step counterfactual'' estimator:

$$
\widehat{\text{AcceptLength}}_{t,K}[\cM] = \sum_{k=1}^{K+1} k (1-\gamma_k)\prod_{j=1}^{k-1}\gamma_j
$$
satisfies that
$
 \E_{\cM}\left[ \widehat{\text{AcceptLength}}_{t,K}[\cM]  \middle| x_{\leq t} \right] = \E_{\cM}\left[ \# \text{ of accepted tokens}\middle| x_{\leq t}\right].
$ 
\end{theorem}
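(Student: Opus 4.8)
The plan is to compute both sides of the claimed identity directly from the definition of acceptance length and check they agree. Conditional on $x_{\leq t}$, let $A$ denote the (random) number of accepted tokens in the chunk. Since a token at position $k$ is accepted only if all prior tokens $1,\dots,k-1$ in the chunk were accepted, the events ``$A \geq k$'' are nested, and we have $\P_\cM[A \geq k \mid x_{\leq t}] = \E_\cM\big[\prod_{j=1}^{k-1}\gamma_j \,\big|\, x_{\leq t}\big]$ for $k = 1,\dots,K$, with the convention that the empty product is $1$ (so $\P[A\geq 1]=1$ after accounting for the bonus-token bookkeeping; I will need to be careful about whether ``length'' counts the free token drawn from $p$ when all $K$ pass, which is exactly why $\gamma_{K+1}=0$ is introduced and why the sum runs to $K+1$). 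The key observation is that each $\gamma_k$ is itself a deterministic function of the prefix $x_{\leq t+k-1}$ — by Theorem~\ref{thm:spec_decoding} it equals $1-\TV(p(\cdot\mid x_{\leq t+k-1}), q(\cdot\mid x_{\leq t+k-1}))$, and by Theorem~\ref{thm:eagle} the analogous top-$L$ mass — so once we condition on the realized prefix there is no further randomness in $\gamma_k$ beyond that inherited from $x_{t+1:t+k-1}$.

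First I would establish the telescoping/Abel-summation identity that rewrites $\sum_{k=1}^{K+1} k(1-\gamma_k)\prod_{j=1}^{k-1}\gamma_j$ as $\sum_{k=1}^{K+1}\prod_{j=1}^{k-1}\gamma_j$. This is the finite analogue of $\E[A] = \sum_k \P[A\geq k]$: writing $P_k := \prod_{j=1}^{k-1}\gamma_j$ (so $P_1 = 1$) and noting $(1-\gamma_k)P_k = P_k - P_{k+1}$, the sum $\sum_{k=1}^{K+1} k(P_k - P_{k+1})$ telescopes by parts to $\sum_{k=1}^{K+1} P_k$, using $P_{K+2}=0$ which holds because $\gamma_{K+1}=0$. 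So the estimator is algebraically equal to $\sum_{k=1}^{K+1}\prod_{j=1}^{k-1}\gamma_j$, a cleaner object.

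Next I would take $\E_\cM[\,\cdot \mid x_{\leq t}]$ of this reduced form and match it to $\E_\cM[A \mid x_{\leq t}] = \sum_{k\geq 1}\P_\cM[A\geq k \mid x_{\leq t}]$. The term $\E_\cM[\prod_{j=1}^{k-1}\gamma_j \mid x_{\leq t}]$ must be shown to equal $\P_\cM[\text{tokens } 1,\dots,k-1 \text{ all accepted} \mid x_{\leq t}]$. This is the crux: it follows from the tower property applied one token at a time, since conditional on $x_{\leq t+k-2}$ the probability that token $k-1$ is accepted is exactly $\gamma_{k-1}$ (a measurable function of that prefix) and, under the estimator's stipulation $x_{t+1:t+K}\sim p(\cdot\mid x_{\leq t})$, the prefix itself evolves according to $p$ — which is legitimate precisely because Theorems~\ref{thm:spec_decoding} and~\ref{thm:eagle} guarantee the speculative output is distributed as $p$, so the counterfactual ``draw the continuation from $p$'' trajectory has the same law as the true accepted-token process. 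Chaining the conditional expectations gives $\E_\cM[\prod_{j=1}^{k-1}\gamma_j\mid x_{\leq t}] = \P_\cM[A\geq k\mid x_{\leq t}]$, and summing over $k$ closes the argument.

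The main obstacle I anticipate is not the telescoping (routine) but the measure-theoretic bookkeeping in the last step: making precise that the ``one-step counterfactual'' evaluation — prefilling the verified-by-$p$ trajectory into an arbitrary drafter $q_i$ and reading off $\gamma_k$ — produces the same expectation as a genuine run of $\cM$ with that drafter. One has to argue that $\gamma_k$ depends on the drafter only through $q_i(\cdot\mid x_{\leq t+k-1})$ evaluated on a $p$-distributed prefix, and that this is exactly the distribution of prefixes that would arise in a real run conditioned on reaching step $k$ — the nesting of acceptance events is what makes ``conditioned on reaching step $k$'' coincide with ``the first $k-1$ draws came from $p$.'' I would state this as a short lemma (the per-step acceptance probability is $\sigma$-$(x_{\leq t+k-1})$-measurable and the accepted-prefix law is the $p$-rollout law) and then the theorem is immediate. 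A secondary subtlety worth a sentence is the $K+1$ term: the ``bonus'' token drawn from $p$ when the whole chunk is accepted contributes the final $+1$ to the length, and setting $\gamma_{K+1}=0$ encodes that this bonus token is always ``accepted'' in the length count while terminating the recursion, which is what makes $P_{K+2}=0$ and the telescoping clean.
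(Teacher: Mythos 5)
Your algebraic skeleton is sound and, modulo an Abel summation, it is essentially the paper's computation in a different order: the paper matches the pmf term by term, writing $\E[\#\text{ of accepted tokens}\mid x_{\leq t}]=\sum_{k=1}^{K+1}k\,\P[\text{first }k-1\text{ accepted},\,k\text{-th not accepted}\mid x_{\leq t}]$ and peeling each term into $\E[(1-\gamma_k)\gamma_{k-1}\cdots\gamma_1\mid x_{\leq t}]$ by repeatedly conditioning on the realized prefix, whereas you first telescope the estimator into the survival form $\sum_{k=1}^{K+1}\prod_{j=1}^{k-1}\gamma_j$ (your use of $\gamma_{K+1}=0$ to kill the boundary term is correct) and then invoke $\E[A\mid x_{\leq t}]=\sum_k\P[A\geq k\mid x_{\leq t}]$. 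Those two routes are interchangeable, and the measure-theoretic content (each $\gamma_j$ is a measurable function of $x_{\leq t+j-1}$, then apply the tower property one token at a time) is the same.

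The problem is the lemma you propose to hang the crux step on: it is not true that ``the counterfactual draw-the-continuation-from-$p$ trajectory has the same law as the true accepted-token process,'' nor that ``conditioned on reaching step $k$'' coincides with ``the first $k-1$ draws came from $p$.'' Losslessness (Theorems~\ref{thm:spec_decoding} and~\ref{thm:eagle}) only says the \emph{unconditional} law of the output tokens is $p$; conditioned on acceptance at step $j$, the token is distributed proportionally to $\min\{p(\cdot\mid x_{\leq t+j-1}),\,q(\cdot\mid x_{\leq t+j-1})\}$ (or to $p$ restricted to the top-$L$ candidate set in EAGLE), not according to $p$. So the ``accepted-prefix law equals the $p$-rollout law'' lemma is false as stated, and if your argument genuinely routed the identity $\E[\prod_{j<k}\gamma_j\mid x_{\leq t}]=\P[A\geq k\mid x_{\leq t}]$ through it, that would be a real gap. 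The theorem does not need any such claim, and the paper's proof never makes one: both sides are evaluated on the probability space stipulated in the statement, where the chunk tokens are the target rollout $x_{t+1:t+K}\sim p(\cdot\mid x_{\leq t})$ and acceptance enters only through the conditional probabilities $\gamma_j$ given the realized prefix. In that space you only ever condition on token values (never on acceptance events when determining token laws), pull out $\gamma_{k-1}$ as a function of $x_{\leq t+k-2}$, and iterate the tower property. If you drop the ``same law'' lemma and justify the crux step exactly this way, your telescoped variant of the proof is correct and equivalent to the paper's.
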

\begin{wrapfigure}{r}{0.4\textwidth} 
  \centering
  \includegraphics[width=\linewidth]{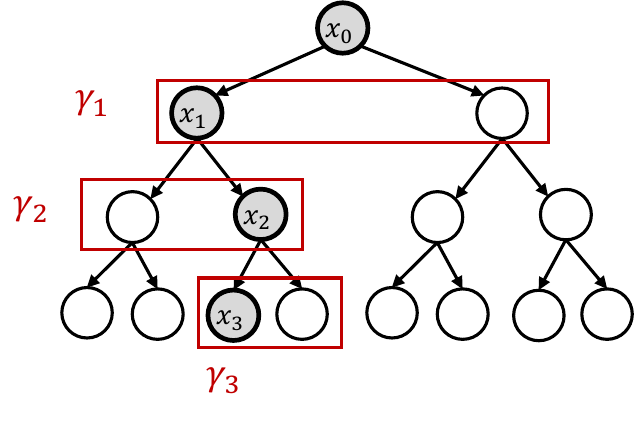}
  \vspace{-2.5em}
  \caption{Illustration of the conditional acceptance probability $\gamma_k$ for the EAGLE model. Observe that $\gamma_k$ is not the probability of accepting $x_k$ only, but rather the total probability of accepting a token at level $x_k$. Also note that for evaluating the estimator, we never need to compute the probabilities of the target model for the possible alternative trajectories that the draft models generate.}
  \label{fig:illus_gamma}
\end{wrapfigure}
The proof of Theorem~\ref{thm:unbiased_len_estimate} is included in Appendix~\ref{theorem 3}.
This result is non-trivial because the probability $\gamma_k$ is not the probability of accepting the specific realized token $x_{t+k}$ given $x_{t+1:t+k-1}$ but rather the probability of accepting any token $\tilde{x}_{t+k}\sim p(\cdot | x_{t+1:t+k-1})$ even if $\tilde{x}_{t+k} \neq x_{t+k}$. 
We cannot compute the $\E\left[ \# \text{ of accepted tokens}\middle| x_{\leq t}\right]$ directly because that would require access to all possible (combinatorial many) ways the target language model $p$ rolls out (see the illustration in Figure~\ref{fig:illus_gamma}). The theorem proves that \textbf{as long as we have a single trajectory rolled out by the target model} (which we do since speculative decoding is lossless), we can counterfactually compute an unbiased estimator of the acceptance length for any alternative drafters. 

\noindent\textbf{Variance of our estimator.} Observe that our length estimator has a bounded range $[1,K+1]$, thus by Popoviciu's inequality
$\Var\left[\widehat{\text{AcceptLength}}_{t,K}[\cM] \;\middle|\; x_{\leq t}\right]\leq K^2/4$. In comparison, the estimator from BanditSpec (EXP3-Spec) is $O(NK^2)$, which grows with $N$.

\noindent\textbf{Connection to Experience Replay in Reinforcement Learning.} Our evaluation phase is similar to Experience Replay in the RL literature at a glance. However, we do not have the additional challenges in solving the distribution-shift induced by the policy used to collect the experience.  For this reason, our ``experience replay'' for other draft models can be substantially more effective than that in RL.

\vspace{-1em}
\subsection{HedgeSpec: Loss functions and Delayed Feedback}

We have seen that a single verified trajectory suffices to estimate drafter’s EAL. The remaining step is to formulate the online learning game: at each round $t=1,2,3,...$, nature generates a loss vector $f_t\in[0,1]^N$, and the algorithm selects a drafter $i_t$ and incurs a loss of $f_t[i_t]$.

We just need to figure out how to instantiate it in our problem. There are several options. Each round of the game can be one token or one chunk of tokens. Also, we can choose the loss functions to optimize acceptance-length or acceptance probability.  Pros and cons of these decisions are described in the Appendix~\ref{chunk_level_game}, but we find that the most natural setting is to directly optimize the acceptance length in a token-level game. The loss function is then chosen to be:  $$f_t[i] = 1 - \frac{1}{K+1}\sum_{k=1}^{K+1}k(1-\gamma_{t+k-1}[i])\prod_{j=1}^{k-1}\gamma_{t+j-1}[i].$$
By Theorem~\ref{thm:unbiased_len_estimate}, the expected loss $
\E[f_t[i]]
$ 
measures how much room the $i_{th}$ drafter has to improve in its acceptance length from the maximal chunk length $K+1$ if it starts at predicting token $t$. If we are to optimize acceptance probability instead, then we would choose $f_t[i] = 1- \gamma_t[i]$.

\begin{figure}[tbh]
    \centering
\includegraphics[width=0.47\linewidth]{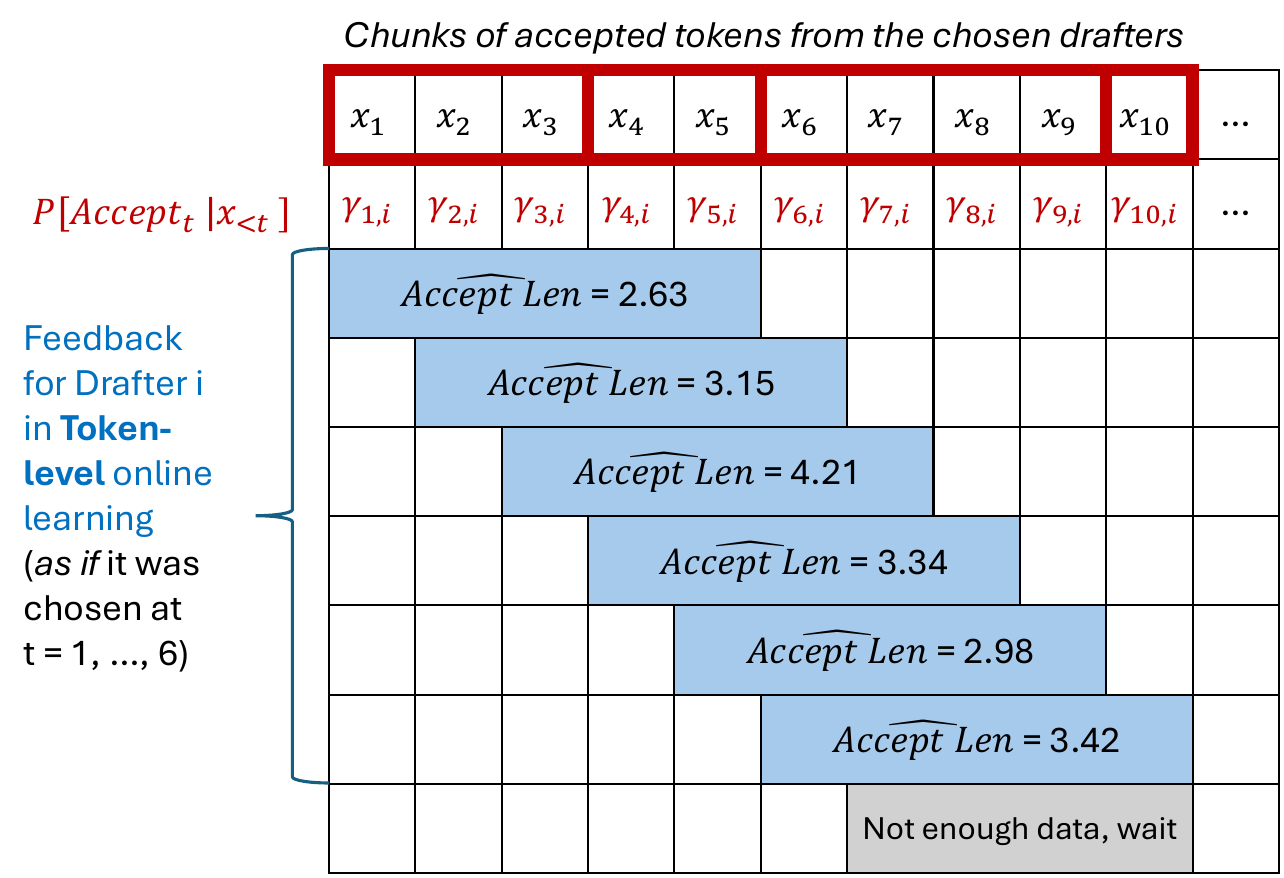}
    \includegraphics[width=0.47\linewidth]{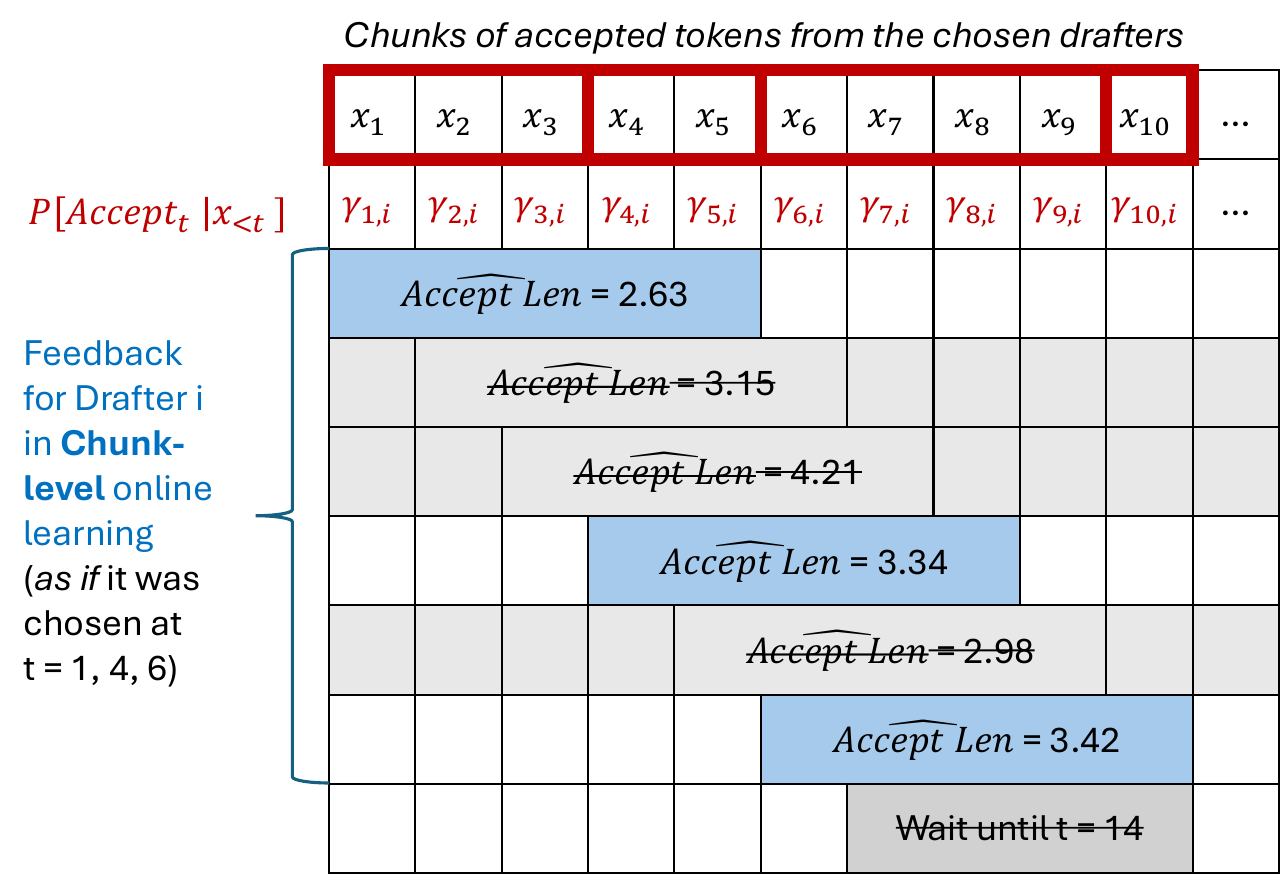}
    \caption{Illustration of the counterfactual feedback for Drafter $j$ (per-token or per-chunk) while the model keeps collecting new data. In this example, the generated token length $K = 5$. For the token-level learner, the feedback is delayed for $K$ steps, while for the chunk-level learner, the delay depends on the average number of chunks to collect $K$-tokens.}
    \label{fig:illus_feedback}
\end{figure} 

Readers with sharp eyes may notice a problem --- the standard online learning game is not applicable!  The accepted tokens are not observed immediately after token $t$, they are revealed in chunks after each speculative decoding chunk is completed. Moreover, unless the chosen drafter maxes out the acceptance length, there is not sufficient information to compute our estimator $\widehat{Accept\_Length}_{t,K}$.
This is because alternative drafters may end up accepting more tokens than the drafter we chose. 
We refer to this problem as \emph{censoring} issue. If we simply ignore the issue and compute the truncated version of the losses, we may end up getting stuck at a suboptimal solution (see the Appendix~\ref{censoring} for an example). In other words, the learner may need to wait until multiple chunks of data to become available before getting the loss vectors for each time $t$ (see Figure~\ref{fig:illus_feedback} for an illustration).

\begin{figure}[h!]
\centering
\fbox{
\begin{minipage}{0.8\linewidth}
    for $t=1,...,T_{\text{token}}:$
    \begin{enumerate}
           \item Nature chooses loss vector $f_t\in[0,1]^N$
        \item Player chooses Drafter $i_t$ for token $t$ and incurs a loss $f_t[i_t]$.
        \item If $t$ is the end of a chunk, Player observes $f_{\leq t}$ (if optimizing acceptance prob) or observes
        $f_{\leq t-K}$ (if optimizing acceptance length).
    \end{enumerate}
\end{minipage}
}
\caption{Token-Level Game with Delay.}\label{fig:token_level_delayed}
\vspace{-1em}
\end{figure}

These issues can be modeled as ``delayed feedback'' \citep{weinberger2002delayed,joulani2013online,van2022nonstochastic}, which can be handled through a blackbox reduction to the standard online learning without delayed feedback with a mild increase in the regret as a function of the expected or maximum delay. 

\begin{theorem}\label{thm:regret_token_level}
Assume $T_{\text{Token}} > 2K+1$. Let $\cA$ be \citep[Algorithm~1]{joulani2013online} instantiated with Hedge or NormalHedge as BASE. Then $i_t$ chosen by $\cA$ in the game defined in Figure~\ref{fig:token_level_delayed} satisfies that: 
    \begin{enumerate}[leftmargin=*]
        \item When we optimize acceptance probability:
\begin{center}
    \resizebox{1\linewidth}{!}{$
            $$
\frac{1}{T_{\text{token}}}\sum_{t=1}^{T_{\text{token}}} \P_{\cA}[x_t \text{ is accepted}] \geq \max_{i^{*}\in[N]} \left\{\frac{1}{T_{\text{token}}}\sum_{t=1}^{T_{\text{token}}} \P_{i^*}[x_t \text{ is accepted}]  \right\} - O(\sqrt{\frac{K\log N}{ T_{\text{token}}} }).
    $$ 
     $}
    \end{center}
    \item When we optimize accepted length:
    \begin{center}
    \vspace{-1em}
    \resizebox{1\linewidth}{!}{$
$$
\frac{1}{T_{\text{token}}}\sum_{t=1}^{T_{\text{token}}}\E_{\cA}[ \text{AcceptLength}_{t,K}[i_t] ] \geq \max_{i^*\in[N]}\left\{\frac{1}{T_{\text{token}}}\sum_{t=1}^{T_{\text{token}}}\E[ \text{AcceptLength}_{t,K}[i^*] ]\right\} - O(\sqrt{\frac{(K+1)^3\log N}{T_{\text{token}}}}).
$$
     $}
    \end{center}
\end{enumerate}
\end{theorem}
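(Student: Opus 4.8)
The plan is to reduce Theorem~\ref{thm:regret_token_level} to a known regret bound for online learning with delayed feedback, namely \citep[Theorem~5 / Algorithm~1]{joulani2013online}, composed with a standard full-information expert bound for the BASE learner (Hedge or NormalHedge). First I would set up the non-delayed surrogate game: for a fixed horizon $T_{\text{token}}$ and fixed loss sequence $f_1,\dots,f_{T_{\text{token}}}\in[0,1]^N$, the black-box reduction of \citep{joulani2013online} runs BASE on a subsampled stream and guarantees $\mathrm{Regret}_T \le (d+1)\,\mathrm{Regret}^{\text{BASE}}_{T/(d+1)} \cdot (\text{const})$, or more precisely $\mathrm{Regret}_T^{\text{delayed}} \le (\tau_{\max}+1)\cdot R^{\text{BASE}}(T)$ where $\tau_{\max}$ is the maximum delay; I would cite the exact constant from their paper rather than rederive it. For Hedge/NormalHedge the full-information regret is $R^{\text{BASE}}(T) = O(\sqrt{T\log N})$. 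The delay here is deterministic: when optimizing acceptance probability the feedback $f_{\le t}$ arrives at the end of the current chunk, so the delay is at most $K-1 < K$; when optimizing acceptance length, Figure~\ref{fig:token_level_delayed} says we only see $f_{\le t-K}$ at chunk end, so the delay is at most $2K-1 < 2K$. In both cases the delay is $O(K)$, which is why $K$ enters under the square root.

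Next I would carry out the two cases. For acceptance probability: the loss is $f_t[i] = 1-\gamma_t[i]$, already bounded in $[0,1]$ and observed exactly (no censoring, since $\gamma_t[i]$ is computed directly from the verified prefix $x_{\le t-1}$ via Theorem~\ref{thm:spec_decoding} or~\ref{thm:eagle}). Plugging delay $d = O(K)$ into the reduction bound and dividing by $T_{\text{token}}$ gives $\frac1{T_{\text{token}}}\sum_t f_t[i_t] - \min_{i^*}\frac1{T_{\text{token}}}\sum_t f_t[i^*] = O(\sqrt{d\log N / T_{\text{token}}}) = O(\sqrt{K\log N/T_{\text{token}}})$; rewriting $f_t[i] = 1-\P_i[x_t\text{ accepted}]$ flips the sign and turns the $\min$ into a $\max$, yielding statement~1. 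I would also note that the algorithm's own acceptance probability $\P_\cA[x_t\text{ accepted}]$ equals $1 - \E f_t[i_t]$ by construction, so taking expectations over the algorithm's randomness (and using that the regret bound holds on every realized loss sequence, hence in expectation) is what delivers the stated inequality.

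For acceptance length the argument is the same template but with two extra wrinkles. First, the loss $f_t[i] = 1 - \frac1{K+1}\sum_{k=1}^{K+1} k(1-\gamma_{t+k-1}[i])\prod_{j=1}^{k-1}\gamma_{t+j-1}[i]$ is a normalized version of the unbiased estimator from Theorem~\ref{thm:unbiased_len_estimate}; I would invoke that theorem to identify $\E[ \text{AcceptLength}_{t,K}[i]] = (K+1)(1 - \E f_t[i])$, so regret in $f$ translates directly into the claimed per-round accepted-length gap after multiplying through by $(K+1)$. This multiplicative factor of $(K+1)$, combined with the delay $d=O(K)$ inside the reduction's $\sqrt{d}$, is exactly what produces the $(K+1)^3$ in the bound: $(K+1)\cdot\sqrt{(2K)\log N/T} = O(\sqrt{(K+1)^3\log N/T})$. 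Second — and this is the step I expect to be the main obstacle — I must argue that the censoring issue is genuinely resolved by the delay formulation, i.e., that waiting until $t-K$ worth of feedback is available means $f_t$ is observed exactly and not truncated. The key observation is that the estimator $\widehat{\text{AcceptLength}}_{t,K}$ depends only on $\gamma_{t}[i],\dots,\gamma_{t+K-1}[i]$, each of which is a function of the verified prefix $x_{\le t+k-1}$; once $K$ more tokens past position $t$ have been committed (which happens within at most $K$ additional chunk boundaries, and at least $2K+1$ tokens total have been generated — hence the hypothesis $T_{\text{token}} > 2K+1$), all these $\gamma$ values are available regardless of which drafter was actually played, so the fed-back loss vector is the true $f_t$ and the reduction applies verbatim. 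I would also remark that one must check the delays are uniformly bounded (they are, by $2K$) so that the maximum-delay form of \citep{joulani2013online} applies; the expected-delay refinement is not needed here since the bound $2K$ is already $O(K)$.
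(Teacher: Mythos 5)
Your proposal follows essentially the same route as the paper's proof: apply the black-box delayed-feedback reduction of Joulani et al.\ to Hedge's $O(\sqrt{T\log N})$ full-information bound with delay bounded by $2K$, then translate the loss-regret into the stated guarantees using the definition of $\gamma_t[i]$ for acceptance probability and the unbiased estimator of Theorem~\ref{thm:unbiased_len_estimate} (scaled by $K+1$, which yields the $(K+1)^3$) for accepted length, with the same resolution of censoring via the $f_{\le t-K}$ delay. The only point the paper makes explicit that your ``by construction'' glosses over is the conditional independence of $i_t$ and $x_t$ given $x_{\le t-1}$ (the drafter is chosen before the token is realized, and losslessness means $x_t\sim p$ regardless of $i_t$), which is what licenses $\E[\gamma_t[i_t]]=\P_{\cA}[x_t\text{ is accepted}]$ and $\E[\widehat{\text{AcceptLength}}_{t,K}[i_t]]=\E[\text{AcceptLength}_{t,K}[i_t]]$ after taking expectations of the pathwise regret bound.
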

The proof (rigorously written in Appendix~\ref{theorem_4}) applies the reduction to the regret bounds without delay in Theorem~1 of \citep{joulani2013online} by noting that the max delay is $2K$. Then the result follows by taking expectation (over the distribution induced by the target model) on both sides and applying the definition of $\gamma_t[i]$ (or the unbiasedness of the length estimator in Theorem~\ref{thm:unbiased_len_estimate}).

The result says that the choices made by the online learner is nearly as good as the optimal choice in terms of either the average acceptance probability, or the expected accepted length.

\subsection{Practical algorithms and extensions}

In this section, we briefly describe the online learning algorithms that we chose to implement for the experiments. More detailed algorithms blocks can be found in the cited references therein. In our paper, we adopt NormalHedge \citep{chaudhuri2009parameter} as our base algorithm (details in Figure~\ref{normalhedge}). For discussion regarding more hedging variants, please refer to~\ref{sec:hedge_variants} for detailed discussion.

\noindent\textbf{Handling delays.} We use the algorithms in \citep{joulani2013online} for handling delay. In particular, the theoretical results above can be obtained by using \citep[Algorithm 1]{joulani2013online}, which handles bounded but non-constant delay (\citep{weinberger2002delayed} requires constant delay).  Moreover, since our problem is not really adversarial, but rather a Markov process induced by the target LLM, we found that \citep[Algorithm~2]{joulani2013online} that operates in the ``stochastic setting'' works the best for us in practice.  The algorithm is stated for the more general ``partial monitoring'' setting, but we are instantiating it in the full-information setting. Basically, the idea is to maintain a queue of the feedback and keep applying the next available actions generated by the base algorithm. In the iid setting, the regret is $\text{Regret}_T + O(\text{MaxDelay})$ with $\text{Regret}_T$ being the regret achieved by the base algorithm. We apply this algorithm as a heuristic (despite that the settings are iid) for the efficiency of learning in practice. More practical implementations heuristics can be found in~\ref{practical}.
\section{Empirical evaluation}
In this section, we provide comprehensive evaluations of HedgeSpec by conducting the following analysis: \textbf{1.} does HedgeSpec yield better \textbf{end-to-end performance}; \textbf{2.} an \textbf{in-depth analysis} of hedge based selection process.\; \textbf{3.} How does HedgeSpec perform \textbf{relative to offline based method}?
\subsection{Experiment setup and baselines}
\label{baseline}
We use Llama-3.1-8B-Instruct~\citep{dubey2024llama}, Qwen-3-8B and Qwen-3-32B~\citep{yang2025qwen3} reasoning model as target models. In our main paper, all drafters in this paper are implemented with EAGLE-3, a widely used framework for speculative decoding. More experiments and discussion regarding the integration of HedgeSpec to broader speculative decoding framework can be found in Appendix~\ref{broader}. We adopt EAGLE’s default generation configuration (see Section~\ref{EAGLE_CONFIG}). For hedge algorithm update, we use expected length to compute the loss and more detailed setup can be found in~\ref{practical}. We compare against the state-of-the-art drafter selection framework, BanditSpec~\citep{hou2025banditspec}, including both Exp3Spec and UCBSpec variants.

We report the Mean Number of Accepted Tokens (MAT) along with the wall-clock time required to complete each request which we use to compute token per second. All models are served using FP16 precision with a batch size of 1, following standard practices in latency-focused studies~\citep{fu2024break, he2023rest, cai2024medusa}. Specifically, Token/s reflects the end-to-end latency during inference. These metrics are widely used~\citep{hou2025banditspec, li2025eagle3} in speculative decoding and are positively correlated: longer accepted lengths typically lead to better throughput. All experiments are conducted on nodes with 8 NVIDIA A100 GPUs connected via NVLink. We defer jointly trained drafter discussion~\ref{joint_trained} and hedging algorithm variants~\ref{sec:hedge_variants} due to space limit.

\subsection{Curating diverse drafters for large-scale evaluation}
\label{drafters}

To thoroughly evaluate the effectiveness of our framework at scale, we build 21 drafters upon the official EAGLE-3 models~\citep{li2025eagle3, tengyunw, AngelSlim2025} and finetune them on seven open-sourced datasets spanning multiple domains: Python~\citep{python}, Math~\citep{toshniwal2024openmath2}, Biology~\citep{bio}, Chemistry~\citep{chem}, MedicalQA~\citep{chen2024huatuogpto1medicalcomplexreasoning}, CNN\_DM~\citep{nallapati2016abstractive} and SQL~\citep{gretel-synthetic-text-to-sql-2024}. The statistics of the resulting Llama and Qwen drafters are summarized in Table~\ref{tab:llama_domain_drafter_results},~\ref{tab:qwen_domain_drafter_results} and~\ref{tab:qwen32b_domain_drafter_results}.
We observe that finetuning the generic EAGLE on a specific domain can greatly enhance its in-domain ability. We use SpecForge~\citep{specforge2025} as the training pipeline. In the meantime, no single model performs well across all domains, often exhibiting noticeable efficiency drops outside its specialization, and on average performs worse than the vanilla EAGLE. These drafters constitute a realistic experimental pool for evaluating our framework, and we will see that HedgeSpec orchestrates multiple drafters without prior knowledge to jointly accelerate the LLM inference process.

\begin{table*}[t]
\centering
\footnotesize
\renewcommand{\arraystretch}{1.2}
\setlength{\tabcolsep}{3pt}
\resizebox{\textwidth}{!}{
\begin{tabular}{lcccccccccccccc|cc}
\toprule
\textbf{Datasets} & \multicolumn{2}{c}{Python} & \multicolumn{2}{c}{Math} & \multicolumn{2}{c}{Biology} & \multicolumn{2}{c}{Chemistry} & \multicolumn{2}{c}{MedQA} & \multicolumn{2}{c}{CNN\_DM} & \multicolumn{2}{c|}{SQL} &  &  \\
 \cmidrule(lr){1-1}\cmidrule(lr){2-3} \cmidrule(lr){4-5} \cmidrule(lr){6-7} \cmidrule(lr){8-9} \cmidrule(lr){10-11} \cmidrule(lr){12-13} \cmidrule(lr){14-15} \cmidrule(lr){16-17}
\textbf{Drafter domains} & MAT & Token/s & MAT & Token/s & MAT & Token/s & MAT & Token/s & MAT & Token/s & MAT & Token/s & MAT & Token/s &\textbf{Avg MAT} & \textbf{Avg Token/s} \\
\midrule
EAGLE & 6.48 & 87.37 & 5.88 & 76.35 & 5.95 & 71.20 & 5.28 & 71.28 & 4.96 & 66.48 & 5.31 & 67.26 & 5.99 & 80.41 & \textbf{5.69} & \textbf{74.34} \\
Python & \textbf{7.89} & \textbf{106.99} & 5.05 & 67.50 & 2.86 & 36.63 & 3.64 & 49.87 & 2.65 & 33.77 & 2.94 & 31.12 & 4.87 & 65.45 & 4.27 & 55.90 \\
Math & 4.52 & 62.77 & \textbf{8.03} & \textbf{106.43} & 3.07 & 39.46 & 4.15 & 55.74 & 3.02 & 42.22 & 3.32 & 44.79 & 4.36 & 60.16 & 4.35 & 58.79 \\
Biology & 3.29 & 35.34 & 3.91 & 47.46 & \textbf{7.27} & \textbf{96.10} & 4.35 & 56.42 & 4.42 & 56.44 & 3.17 & 36.19 & 2.72 & 28.78 & 4.16 & 50.96 \\
Chemistry & 3.80 & 51.84 & 6.46 & 86.28 & 4.45 & 59.93 & \textbf{7.39} & \textbf{96.28} & 3.82 & 50.42 & 3.09 & 32.56 & 3.76 & 46.90 & 4.68 & 60.60 \\
MedicalQA & 3.98 & 42.65 & 4.49 & 52.33 & 4.95 & 63.53 & 4.47 & 56.19 & \textbf{6.75} & \textbf{88.32} & 3.38 & 36.46 & 3.93 & 45.00 & 4.56 & 54.93 \\
CNN\_DM & 2.07 & 26.85 & 2.89 & 38.79 & 3.02 & 40.55 & 2.99 & 39.21 & 3.11 & 42.22 & \textbf{6.20} & \textbf{77.19} & 2.14 & 28.12 & 3.20 & 41.85 \\
SQL & 3.71 & 44.16 & 3.66 & 46.47 & 2.64 & 36.34 & 2.96 & 31.84 & 2.62 & 34.15 & 2.90 & 37.94 & \textbf{8.49} & \textbf{114.52} & 3.85 & 49.35 \\
\bottomrule
\end{tabular}
}
\caption{Statistics of the 7 curated drafters with Llama-3.1-8B-IT as the target (\textbf{Bold} = best). Each drafter performs strongly in-domain (diagonally) but suffers noticeable inefficiency when applied outside, and on average performs worse than the vanilla EAGLE model. Similar trends are observed in Qwen's drafters in Table~\ref{tab:qwen_domain_drafter_results}. These drafters form a realistic evaluation pool for evaluating HedgeSpec, and we will see in Table~\ref{tab:hedge_spec_results}, HedgeSpec orchestrates them to jointly accelerate the LLM inference.}
\vspace{-2em}
\label{tab:llama_domain_drafter_results}
\end{table*}

\subsection{End-to-end efficiency analysis}

We report the efficiency evaluation of HedgeSpec. We first conduct an overhead break down in HedgeSpec, followed by the end-to-end evaluation comparing with EAGLE and BanditSpec based drafting. Overall, HedgeSpec  outperforms all other baselines with a significant margin consistently across every single domain, demonstrating its effectiveness.

\subsubsection{Evaluation overhead breakdown}

\begin{wraptable}{r}{0.5\textwidth} 
  \centering
  \caption{Overhead comparison for drafter evaluation for Llama-3.1-8B-IT. Compared to an expensive target call, even a small boost in acceptance from HedgeSpec is enough to offset the cost of evaluating all drafters.}
  \label{tab:timing_breakdown}
  \resizebox{\linewidth}{!}{%
    \begin{tabular}{lccc}
      \toprule
      \textbf{Major Components} & \textbf{Llama Forward} & \textbf{EAGLE Forward} & \textbf{Hedge Update} \\
      \midrule
      Time/ms & 75.709 & 2.497 & 0.413 \\
      \bottomrule
    \end{tabular}
  }
\end{wraptable}

We analyze the evaluation overhead induced by incorporating global feedback for faster adaptation. This overhead comes from two major components: (i) prefilling drafters and (ii) computing losses and updating hedge weights. Table~\ref{tab:timing_breakdown} reports the breakdown.
Here, ‘Llama’ and ‘EAGLE forward’ denote forward passes through the target and drafter, respectively, while ‘hedge update’ includes both loss computation and NormalHedge weight updates. Evaluating a drafter costs roughly 1/25 of a target forward, since EAGLE’s drafter is a lightweight one-layer transformer compared to the 32-layer 8B Llama target.
Under ideal assumptions, it implicates that if HedgeSpec secures one additional MAT, the gain offsets the cost of evaluating up to 25 drafters in sequence. In practice, the overhead is even smaller since drafter evaluations are independent and can be run in parallel. Thus, HedgeSpec is highly worthwhile: improved acceptance rates reduce costly target calls, outweighing the added cost of drafter evaluation.

\begin{table*}[t!]
\centering
\footnotesize
\renewcommand{\arraystretch}{1.2}
\setlength{\tabcolsep}{3pt}
\resizebox{\textwidth}{!}{
\begin{tabular}{lcccccccccccccc|cc}
\toprule
\textbf{Datasets} & \multicolumn{2}{c}{Python} & \multicolumn{2}{c}{Math} & \multicolumn{2}{c}{Biology} & \multicolumn{2}{c}{Chemistry} & \multicolumn{2}{c}{MedQA} & \multicolumn{2}{c}{CNN\_DM} & \multicolumn{2}{c|}{SQL} &  &  \\
 \cmidrule(lr){1-1}\cmidrule(lr){2-3} \cmidrule(lr){4-5} \cmidrule(lr){6-7} \cmidrule(lr){8-9} \cmidrule(lr){10-11} \cmidrule(lr){12-13} \cmidrule(lr){14-15} \cmidrule(lr){16-17}
\textbf{Methods} & MAT & Token/s & MAT & Token/s & MAT & Token/s & MAT & Token/s & MAT & Token/s & MAT & Token/s & MAT & Token/s & \textbf{Avg MAT} & \textbf{Avg Token/s} \\
\midrule
\textit{LLaMA-3.1-8B-IT} & 1.00 & 17.03 & 1.00 & 18.87 & 1.00 & 18.44 & 1.00 & 18.25 & 1.00 & 18.57 & 1.00 & 17.86 & 1.00 & 17.86 & 1.00 & 18.13 \\
Eagle           & 6.48 & 87.37 & 5.88 & 76.35 & 5.95 & 71.20 & 5.28 & 71.28 & 4.96 & 66.48 & 5.31 & 67.26 & 5.99 & 80.41 & 5.69 & 74.34 \\
UCBSpec         & 5.44 & 74.39 & 5.75 & 77.57 & 5.22 & 70.27 & 5.11 & 71.79 & 4.46 & 61.14 & 3.94 & 50.47 & 5.71 & 76.58 & 5.09 & 68.89 \\
EXP3Spec        & 5.16 & 69.29 & 5.59 & 74.21 & 4.93 & 67.70 & 4.93 & 64.25 & 4.25 & 58.13 & 3.81 & 49.70 & 5.37 & 73.29 & 4.86 & 65.22 \\
\textbf{HedgeSpec}       & \textbf{7.69} & \textbf{99.58} & \textbf{7.69} & \textbf{98.63} & \textbf{7.18} & \textbf{93.78} & \textbf{7.10} & \textbf{89.65} & \textbf{6.47} & \textbf{77.26} & \textbf{5.88} & \textbf{70.68} & \textbf{8.06} & \textbf{103.31} & \textbf{7.15} & \textbf{90.41} \\
\midrule
\textit{Qwen-3-8B}       & 1.00 & 14.55 & 1.00 & 14.58 & 1.00 & 14.64 & 1.00 & 14.52 & 1.00 & 14.59 & 1.00 & 14.49 & 1.00 & 14.65 & 1.00 & 14.57 \\
Eagle           & 4.96 & 55.84 & 4.52 & 50.88 & 4.06 & 46.22 & 3.98 & 44.85 & 3.84 & 44.13 & 4.07 & 46.20 & 4.20 & 44.60 & 4.23 & 47.53 \\
UCBSpec         & 4.75 & 54.80 & 5.30 & 61.28 & 4.16 & 47.29 & 4.73 & 54.37 & 4.25 & 48.96 & 3.58 & 41.07 & 5.28 & 61.28 & 4.58 & 52.72 \\
EXP3Spec        & 4.54 & 52.77 & 5.10 & 59.06 & 4.03 & 46.50 & 4.46 & 50.66 & 4.07 & 45.02 & 3.39 & 37.53 & 4.97 & 56.74 & 4.37 & 49.75 \\
\textbf{HedgeSpec}       & \textbf{6.32} & \textbf{68.52} & \textbf{7.27} & \textbf{79.55} & \textbf{5.66} & \textbf{62.08} & \textbf{6.61} & \textbf{73.18} & \textbf{6.08} & \textbf{65.82} & \textbf{5.10} & \textbf{54.97} & \textbf{7.52} & \textbf{81.94} & \textbf{6.37} & \textbf{69.44} \\
\midrule
\textit{Qwen-3-32B}      & 1.00 & 8.13 & 1.00 & 7.96 & 1.00 & 8.43 & 1.00 & 8.33 & 1.00 & 8.20 & 1.00 & 8.06 & 1.00 & 8.55 & 1.00 & 8.21 \\
Eagle           & 3.02 & 21.98 & 3.36 & 24.16 & 2.62 & 19.30 & 3.01 & 21.53 & 2.57 & 18.33 & 2.59 & 18.67 & 3.00 & 21.34 & 2.88 & 20.76 \\
UCBSpec         & 4.34 & 31.42 & 5.24 & 38.69 & 4.25 & 31.73 & 4.76 & 35.58 & 4.19 & 30.99 & 3.64 & 26.93 & 5.07 & 37.36 & 4.50 & 33.24 \\
EXP3Spec        & 4.22 & 30.33 & 5.13 & 37.42 & 4.10 & 30.31 & 4.54 & 33.46 & 3.96 & 28.76 & 3.52 & 25.32 & 4.88 & 35.29 & 4.33 & 31.55 \\
\textbf{HedgeSpec}       & \textbf{5.82} & \textbf{38.44} & \textbf{6.96} & \textbf{45.14} & \textbf{5.93} & \textbf{39.13} & \textbf{6.50} & \textbf{42.86} & \textbf{5.92} & \textbf{38.60} & \textbf{5.19} & \textbf{33.38} & \textbf{7.16} & \textbf{45.30} & \textbf{6.21} & \textbf{40.41} \\
\bottomrule
\end{tabular}
}
\caption{MAT (Mean Accepted Tokens) and Token/s (token generation rate) across datasets for each method. \textbf{Bold} indicates the best. Results of GSM8K and HumanEval and more other datasets, which is out of the training distribution, is in Table~\ref{tab:gsm8k_humaneval_results} showing the same trend. HedgeSpec consistently outperforms all baselines across domains. Its adaptive orchestration of expert drafters improves MAT and throughput, while full-information feedback delivers substantial gains over bandits, translating into real efficiency and highlighting HedgeSpec’s effectiveness.}
\label{tab:hedge_spec_results}
\vspace{-1.5em}
\end{table*}

\subsubsection{HedgeSpec achieves consistent end-to-end gains across domains}

We further present the end-to-end results of HedgeSpec against EAGLE and BanditSpec across seven testsets in Table~\ref{tab:hedge_spec_results}. For GSM8K~\citep{cobbe2021training} and HumanEval~\citep{chen2021evaluating} which are outside the training domains, similar trends hold and we defer details to Section~\ref{gsm8k}. The experiments presented here are conducted under the greedy inference setting of the target model. Results under non-greedy inference exhibit a similar trend, and a detailed discussion is deferred to Table~\ref{non-greedy}.

Overall, HedgeSpec consistently outperforms both EAGLE and bandit-based methods across all domains. Compared to EAGLE, its adaptive selection of speculative drafters enables effective orchestration, yielding faster responses. 
Notably, on SQL requests with Qwen, HedgeSpec improved MAT from 4.2 to 7.52 (an impressive 79\% gain) and token/s from 44.6 to 81.94 (a 83.7\% gain). Across all mixed queries, HedgeSpec achieved a strong 46.1\% average improvement.
While individual domain-specialized drafters perform worse on average than EAGLE, our results show that orchestrating them with HedgeSpec effectively leverages their strengths, leading to substantial efficiency gains.

HedgeSpec also surpasses bandit methods by a wide margin (up to 49\% MAT gain and 41\% token/s gain). Its advantage comes from the panoramic feedback: all drafters are evaluated, enabling faster convergence, higher acceptance, which finally leads to fewer target calls and better overall efficiency. In contrast, bandit learners adapt slowly because they only observe feedback from the chosen drafter, often converging to suboptimal choices and resulting in lower acceptance and throughput.

Interestingly, we also observe that bandit methods generally underperform EAGLE on LLaMA-3.1-8B-IT, whereas they outperform it on Qwen-3-8B. This is because the Qwen-3 series reasoning model tends to produce longer outputs than LLaMA-3.1-IT (i.e. 1.64x length in Math workload), providing more time for bandits to converge. The same phenomenon applies to HedgeSpec, where the longer reasoning chains in Qwen amplify its efficiency advantage, highlighting HedgeSpec’s superiority in long-generation scenarios.

\subsection{A deepdive into full-information benefits}

We have seen HedgeSpec delivers better efficiency than EAGLE and bandit-based methods. In this section, we dive deeper into the benefit of full information on two aspects: (i) how their regret develops over time, and (ii) how each method scales with increasing number of drafters.

\paragraph{HedgeSpec settles faster to near-zero (average) regret.}Figure~\ref{fig:cumulative_regret} (a) shows cumulative regret on Llama-3.1 with python workload, measured by normalized EAL. Bandit methods accumulate regret quickly due to slow adaptation with only partial feedback, wasting exploration on weak drafters. In contrast, HedgeSpec rapidly converges to near-zero regret (detailed discussion in~\ref{regret_bound}) within a handful of steps, demonstrating the benefit of full information.

\begin{wrapfigure}{r}{0.6\textwidth}
  \centering
  \includegraphics[width=\linewidth]{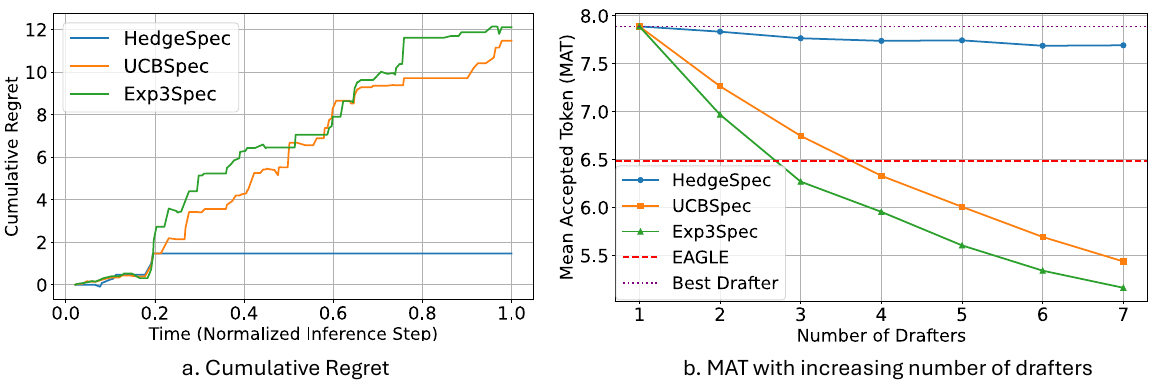}
  \caption{Cumulative regret and MAT vs. number of drafters. HedgeSpec quickly settles with near-zero regret, and can scale up with larger drafter pool. Qwen-3 results show similar trend in Appendix~\ref{qwen_mat}, highlighting HedgeSpec's robustness.}
  \label{fig:cumulative_regret}
\end{wrapfigure}

\paragraph{HedgeSpec scales effectively with larger drafter pools.} We further plot MAT w/ number of candidate drafter increasing in Figure~\ref{fig:cumulative_regret} (b). Scalability matters because larger pools raise the chance of including strong specialists, of course only works if drafter collaboration is effective. The upper and lower lines mark the best drafter and EAGLE's performance. Bandit methods deteriorate sharply with more drafters, as their regret grows quickly and exploration becomes much more costly. In contrast, HedgeSpec scales gracefully, being nearly unaffected because the global information helps to rapidly adapt on the best drafter, remaining effective even with large pool.

\vspace{-1em}

\subsection{Offline router comparison: HedgeSpec's robustness in distribution shift}

\begin{wraptable}{r}{0.35\textwidth} 
\centering
\vspace{-1em}
\footnotesize
\renewcommand{\arraystretch}{1.2}
\setlength{\tabcolsep}{4pt} 
\resizebox{\linewidth}{!}{%
\begin{tabular}{lcc|cc}
\toprule
 & \multicolumn{2}{c|}{\texttt{Llama-3.1-8B-IT}} & \multicolumn{2}{c}{\texttt{Qwen-3-8B}} \\
\midrule
\textbf{Math} & MAT & Token/s & MAT & Token/s \\
\midrule
Eagle & 5.83 & 78.05 & 4.63 & 54.49 \\
Static Router & 5.28 & 70.40 & 4.95 & 55.18 \\
\textbf{HedgeSpec} & \textbf{6.94} & \textbf{90.15} & \textbf{7.12} & \textbf{77.45} \\
\midrule
\textbf{MedQA} & MAT & Token/s & MAT & Token/s \\
\midrule
Eagle & 5.23 & 67.67 & 3.91 & 43.54 \\
Static Router & 3.03 & 40.83 & 2.49 & 28.18 \\
\textbf{HedgeSpec} & \textbf{5.56} & \textbf{70.38} & \textbf{6.04} & \textbf{65.99} \\
\bottomrule
\end{tabular}
} 
\caption{HedgeSpec vs. offline router under O.O.D (\textbf{Bold}=best). The router suffers and mis-routes requests to suboptimal drafters. HedgeSpec adapts through runtime feedback, and remains robust.}
\label{tab:static}
\vspace{-2em}
\end{wraptable}

In this section, we compare HedgeSpec with a static offline router. An offline router is a classifier trained on the request distribution and applied at serving time to dispatch queries to the best drafter. For fairness, we finetune a BERT~\citep{devlin2019bert} classifier on the aggregated seven-domain dataset used to construct the drafters (hyperparameter in~\ref{sec:bert_config}). The classifier achieved 100\% test accuracy across all domains, indicating that under closed-world assumptions, a lightweight classifier can reliably route requests to domain-specialized drafters.

However, this approach critically assumes that all runtime queries remain in training distribution. In practice, cloud serving frequently encounters O.O.D prompts~\citep{liu2023jailbreaking,chao2025jailbreaking,cao2024envisioning}. We came across a simple yet revealing case when attempting to elicit longer reasoning with the instruction:
\textit{‘Please carefully read the question. After that, please generate two answers to validate it. Output the one you think works well.’}
This minor prompt variation caused catastrophic failures: the classifier misrouted 98\% of MedQA queries and 90\% of Math queries. This was not even an adversarial attack but a natural prompt variation, underscoring the fragility of static routing. 
Such misrouting is costly as query dispatched to unsuitable drafter incurs long-tail overhead.
As shown in Table~\ref{tab:static}, HedgeSpec remains robust under distribution shift, achieving up to 2.34× gains over the offline router. Its adaptive online learning leverages runtime feedback to identify the best drafter on-the-fly, offering three key advantages for real deployment: (i) no reliance on prior knowledge, (ii) resilience to O.O.D queries as long as experts remain useful, and (iii) adaptability when the prompt does not explicitly reveal the best drafter. Finally, we note that offline routing could complement HedgeSpec by providing a ‘warm start’ in the initial steps, which we leave for future investigation.

\vspace{-1em}
\section{Conclusion}
\vspace{-0.5em}

We present HedgeSpec, a full-information online drafter selection framework for speculative decoding. By leveraging the structure of speculative decoding, HedgeSpec evaluates all candidate drafters without incurring additional calls to the target model, enabling panoramic feedback. We establish theoretical guarantees under this setting and demonstrate substantial empirical gains, highlighting HedgeSpec’s robustness and practical effectiveness in real-world LLM serving.

\section*{Acknowledgments}
We thank Yida Wang from AWS for helpful discussion at an early stage of the project. We thank the anonymous reviewers and meta-reviewers for helpful feedback that led to improvements to the paper.

\bibliography{iclr2026_conference}

@inproceedings{leviathan2023fast,
  title={Fast inference from transformers via speculative decoding},
  author={Leviathan, Yaniv and Kalman, Matan and Matias, Yossi},
  booktitle={International Conference on Machine Learning},
  pages={19274--19286},
  year={2023},
  organization={PMLR}
}

@article{weinberger2002delayed,
  title={On delayed prediction of individual sequences},
  author={Weinberger, Marcelo J and Ordentlich, Erik},
  journal={IEEE Transactions on Information Theory},
  volume={48},
  number={7},
  pages={1959--1976},
  year={2002},
  publisher={IEEE}
}

@inproceedings{joulani2013online,
  title={Online learning under delayed feedback},
  author={Joulani, Pooria and Gyorgy, Andras and Szepesv{\'a}ri, Csaba},
  booktitle={International conference on machine learning},
  pages={1453--1461},
  year={2013},
  organization={PMLR}
}

@inproceedings{van2022nonstochastic,
  title={Nonstochastic bandits and experts with arm-dependent delays},
  author={Van Der Hoeven, Dirk and Cesa-Bianchi, Nicolo},
  booktitle={International Conference on Artificial Intelligence and Statistics},
  year={2022},
  organization={PMLR}
}

@article{chaudhuri2009parameter,
  title={A parameter-free hedging algorithm},
  author={Chaudhuri, Kamalika and Freund, Yoav and Hsu, Daniel J},
  journal={Advances in neural information processing systems},
  volume={22},
  year={2009}
}

@book{cesa2006prediction,
  title={Prediction, learning, and games},
  author={Cesa-Bianchi, Nicolo and Lugosi, G{\'a}bor},
  year={2006},
  publisher={Cambridge university press}
}

@article{cesa2007improved,
  title={Improved second-order bounds for prediction with expert advice},
  author={Cesa-Bianchi, Nicolo and Mansour, Yishay and Stoltz, Gilles},
  journal={Machine Learning},
  volume={66},
  number={2},
  pages={321--352},
  year={2007},
  publisher={Springer}
}

@article{dubey2024llama,
  title={The llama 3 herd of models},
  author={Dubey, Abhimanyu and Jauhri, Abhinav and Pandey, Abhinav and Kadian, Abhishek and Al-Dahle, Ahmad and Letman, Aiesha and Mathur, Akhil and Schelten, Alan and Yang, Amy and Fan, Angela and others},
  journal={arXiv e-prints},
  pages={arXiv--2407},
  year={2024}
}

@article{yang2025qwen3,
  title={Qwen3 technical report},
  author={Yang, An and Li, Anfeng and Yang, Baosong and Zhang, Beichen and Hui, Binyuan and Zheng, Bo and Yu, Bowen and Gao, Chang and Huang, Chengen and Lv, Chenxu and others},
  journal={arXiv preprint arXiv:2505.09388},
  year={2025}
}

@inproceedings{freund2016open,
  title={Open Problem: Second order regret bounds parametrized by variance across actions and top $\epsilon$ percentile.},
  author={Freund, Yoav},
  booktitle={Conference on Learning Theory},
  pages={1651--1654},
  year={2016},
  organization={PMLR}
}

@inproceedings{hou2025banditspec,
  title={BanditSpec: Adaptive Speculative Decoding via Bandit Algorithms},
  author={Hou, Yunlong and Zhang, Fengzhuo and Du, Cunxiao and Zhang, Xuan and Pan, Jiachun and Pang, Tianyu and Du, Chao and Tan, Vincent YF and Yang, Zhuoran},
  booktitle={International Conference on Machine Learning},
year={2025}
}

@article{chen2023accelerating,
  title={Accelerating large language model decoding with speculative sampling},
  author={Chen, Charlie and Borgeaud, Sebastian and Irving, Geoffrey and Lespiau, Jean-Baptiste and Sifre, Laurent and Jumper, John},
  journal={arXiv preprint arXiv:2302.01318},
  year={2023}
}

@article{littlestone1994weighted,
  title={The weighted majority algorithm},
  author={Littlestone, Nick and Warmuth, Manfred K},
  journal={Information and computation},
  volume={108},
  number={2},
  pages={212--261},
  year={1994},
  publisher={Elsevier}
}

@inproceedings{vovk1995game,
  title={A game of prediction with expert advice},
  author={Vovk, Vladimir G},
  booktitle={Proceedings of the eighth annual conference on Computational learning theory},
  pages={51--60},
  year={1995}
}

@inproceedings{li2024eagle, 
	author = {Yuhui Li and Fangyun Wei and Chao Zhang and Hongyang Zhang}, 
	title = {{EAGLE}: Speculative Sampling Requires Rethinking Feature Uncertainty}, 
	booktitle = {International Conference on Machine Learning},
	year = {2024}
}

@inproceedings{li2024eagle2, 
	author = {Yuhui Li and Fangyun Wei and Chao Zhang and Hongyang Zhang}, 
	title = {{EAGLE-2}: Faster Inference of Language Models with Dynamic Draft Trees}, 
	booktitle = {Empirical Methods in Natural Language Processing},
	year = {2024}
}

@inproceedings{li2025eagle3,
    author = {Yuhui Li and Fangyun Wei and Chao Zhang and Hongyang Zhang},
    title = {{EAGLE-3}: Scaling up Inference Acceleration of Large Language Models via Training-Time Test}, 
    booktitle = {Annual Conference on Neural Information Processing Systems},
    year = {2025}
}

@article{sun2023spectr,
  title={Spectr: Fast speculative decoding via optimal transport},
  author={Sun, Ziteng and Suresh, Ananda Theertha and Ro, Jae Hun and Beirami, Ahmad and Jain, Himanshu and Yu, Felix},
  journal={Advances in Neural Information Processing Systems},
  volume={36},
  pages={30222--30242},
  year={2023}
}

@article{fu2024break,
  title={Break the sequential dependency of llm inference using lookahead decoding},
  author={Fu, Yichao and Bailis, Peter and Stoica, Ion and Zhang, Hao},
  journal={arXiv preprint arXiv:2402.02057},
  year={2024}
}

@article{he2023rest,
  title={Rest: Retrieval-based speculative decoding},
  author={He, Zhenyu and Zhong, Zexuan and Cai, Tianle and Lee, Jason D and He, Di},
  journal={arXiv preprint arXiv:2311.08252},
  year={2023}
}

@article{cai2024medusa,
  title={Medusa: Simple llm inference acceleration framework with multiple decoding heads},
  author={Cai, Tianle and Li, Yuhong and Geng, Zhengyang and Peng, Hongwu and Lee, Jason D and Chen, Deming and Dao, Tri},
  journal={arXiv preprint arXiv:2401.10774},
  year={2024}
}

@misc{tengyunw,
  title = {Official Eagle model for Qwen-3-8B from Tengyunw},
  howpublished = {\url{https://huggingface.co/Tengyunw/qwen3_8b_eagle3}},
  author={Tengyunw},
  year = {2025}
}

@misc{python,
  title = {jtatman/python-code-dataset-500k dataset},
  howpublished = {\url{https://huggingface.co/datasets/jtatman/python-code-dataset-500k}},
  author={jtatman},
  year = {2025}
}

@article{toshniwal2024openmath2,
  title   = {OpenMathInstruct-2: Accelerating AI for Math with Massive Open-Source Instruction Data},
  author  = {Shubham Toshniwal and Wei Du and Ivan Moshkov and  Branislav Kisacanin and Alexan Ayrapetyan and Igor Gitman},
  year    = {2024},
  journal = {arXiv preprint arXiv:2410.01560}
}

@misc{bio,
  title={ToT-Biology},
  author={Matthew Wesney},
  year={2025},
  howpublished={https://huggingface.co/datasets/moremilk/ToT-Biology}
}

@misc{chen2024huatuogpto1medicalcomplexreasoning,
      title={HuatuoGPT-o1, Towards Medical Complex Reasoning with LLMs}, 
      author={Junying Chen and Zhenyang Cai and Ke Ji and Xidong Wang and Wanlong Liu and Rongsheng Wang and Jianye Hou and Benyou Wang},
      year={2024},
      eprint={2412.18925},
      archivePrefix={arXiv},
      primaryClass={cs.CL},
      url={https://arxiv.org/abs/2412.18925}, 
}

@article{nallapati2016abstractive,
  title={Abstractive text summarization using sequence-to-sequence rnns and beyond},
  author={Nallapati, Ramesh and Zhou, Bowen and Gulcehre, Caglar and Xiang, Bing and others},
  journal={arXiv preprint arXiv:1602.06023},
  year={2016}
}

@misc{chem,
  title = {Organic chemistry QA dataset},
  howpublished = {\url{https://huggingface.co/datasets/mlfoundations-dev/PDF_and_SCP_unfiltered_organic_chemistry_questions}},
  author={mlfoundations-dev},
  year = {2025}
}

@software{gretel-synthetic-text-to-sql-2024,
  author = {Meyer, Yev and Emadi, Marjan and Nathawani, Dhruv and Ramaswamy, Lipika and Boyd, Kendrick and Van Segbroeck, Maarten and Grossman, Matthew and Mlocek, Piotr and Newberry, Drew},
  title = {{Synthetic-Text-To-SQL}: A synthetic dataset for training language models to generate SQL queries from natural language prompts},
  month = {April},
  year = {2024},
  url = {https://huggingface.co/datasets/gretelai/synthetic-text-to-sql}
}

@article{cobbe2021training,
  title={Training verifiers to solve math word problems},
  author={Cobbe, Karl and Kosaraju, Vineet and Bavarian, Mohammad and Chen, Mark and Jun, Heewoo and Kaiser, Lukasz and Plappert, Matthias and Tworek, Jerry and Hilton, Jacob and Nakano, Reiichiro and others},
  journal={arXiv preprint arXiv:2110.14168},
  year={2021}
}

@article{chen2021evaluating,
  title={Evaluating large language models trained on code},
  author={Chen, Mark and Tworek, Jerry and Jun, Heewoo and Yuan, Qiming and Pinto, Henrique Ponde De Oliveira and Kaplan, Jared and Edwards, Harri and Burda, Yuri and Joseph, Nicholas and Brockman, Greg and others},
  journal={arXiv preprint arXiv:2107.03374},
  year={2021}
}

@inproceedings{devlin2019bert,
  title={Bert: Pre-training of deep bidirectional transformers for language understanding},
  author={Devlin, Jacob and Chang, Ming-Wei and Lee, Kenton and Toutanova, Kristina},
  booktitle={Proceedings of the 2019 conference of the North American chapter of the association for computational linguistics: human language technologies, volume 1 (long and short papers)},
  pages={4171--4186},
  year={2019}
}

@article{kim2024unified,
  title={A unified framework for speculative decoding with multiple drafters as a bandit},
  author={Kim, Taehyeon and Jung, Hojung and Yun, Se-Young},
  year={2024}
}

@article{liu2023online,
  title={Online speculative decoding},
  author={Liu, Xiaoxuan and Hu, Lanxiang and Bailis, Peter and Cheung, Alvin and Deng, Zhijie and Stoica, Ion and Zhang, Hao},
  journal={arXiv preprint arXiv:2310.07177},
  year={2023}
}

@inproceedings{chao2025jailbreaking,
  title={Jailbreaking black box large language models in twenty queries},
  author={Chao, Patrick and Robey, Alexander and Dobriban, Edgar and Hassani, Hamed and Pappas, George J and Wong, Eric},
  booktitle={2025 IEEE Conference on Secure and Trustworthy Machine Learning (SaTML)},
  pages={23--42},
  year={2025},
  organization={IEEE}
}

@article{liu2023jailbreaking,
  title={Jailbreaking chatgpt via prompt engineering: An empirical study},
  author={Liu, Yi and Deng, Gelei and Xu, Zhengzi and Li, Yuekang and Zheng, Yaowen and Zhang, Ying and Zhao, Lida and Zhang, Tianwei and Wang, Kailong and Liu, Yang},
  journal={arXiv preprint arXiv:2305.13860},
  year={2023}
}

@article{cao2024envisioning,
  title={Envisioning outlier exposure by large language models for out-of-distribution detection},
  author={Cao, Chentao and Zhong, Zhun and Zhou, Zhanke and Liu, Yang and Liu, Tongliang and Han, Bo},
  journal={arXiv preprint arXiv:2406.00806},
  year={2024}
}

@inproceedings{miao2024specinfer,
  title={Specinfer: Accelerating large language model serving with tree-based speculative inference and verification},
  author={Miao, Xupeng and Oliaro, Gabriele and Zhang, Zhihao and Cheng, Xinhao and Wang, Zeyu and Zhang, Zhengxin and Wong, Rae Ying Yee and Zhu, Alan and Yang, Lijie and Shi, Xiaoxiang and others},
  booktitle={Proceedings of the 29th ACM International Conference on Architectural Support for Programming Languages and Operating Systems, Volume 3},
  pages={932--949},
  year={2024}
}

@article{du2024glide,
  title={Glide with a cape: A low-hassle method to accelerate speculative decoding},
  author={Du, Cunxiao and Jiang, Jing and Yuanchen, Xu and Wu, Jiawei and Yu, Sicheng and Li, Yongqi and Li, Shenggui and Xu, Kai and Nie, Liqiang and Tu, Zhaopeng and others},
  journal={arXiv preprint arXiv:2402.02082},
  year={2024}
}

@misc{hu2025optimalmultidraftspeculativedecoding,
      title={Towards Optimal Multi-draft Speculative Decoding}, 
      author={Zhengmian Hu and Tong Zheng and Vignesh Viswanathan and Ziyi Chen and Ryan A. Rossi and Yihan Wu and Dinesh Manocha and Heng Huang},
      year={2025},
      eprint={2502.18779},
      archivePrefix={arXiv},
      primaryClass={cs.CL},
      url={https://arxiv.org/abs/2502.18779}, 
}

@misc{khisti2025multidraftspeculativesamplingcanonical,
      title={Multi-Draft Speculative Sampling: Canonical Decomposition and Theoretical Limits}, 
      author={Ashish Khisti and M. Reza Ebrahimi and Hassan Dbouk and Arash Behboodi and Roland Memisevic and Christos Louizos},
      year={2025},
      eprint={2410.18234},
      archivePrefix={arXiv},
      primaryClass={cs.CL},
      url={https://arxiv.org/abs/2410.18234}, 
}

@misc{chen2025sequoiascalablerobusthardwareaware,
      title={Sequoia: Scalable, Robust, and Hardware-aware Speculative Decoding}, 
      author={Zhuoming Chen and Avner May and Ruslan Svirschevski and Yuhsun Huang and Max Ryabinin and Zhihao Jia and Beidi Chen},
      year={2025},
      eprint={2402.12374},
      archivePrefix={arXiv},
      primaryClass={cs.CL},
      url={https://arxiv.org/abs/2402.12374}, 
}

@misc{ramakrishnan2025omnidraftcrossvocabularyonlineadaptive,
      title={OmniDraft: A Cross-vocabulary, Online Adaptive Drafter for On-device Speculative Decoding}, 
      author={Ramchalam Kinattinkara Ramakrishnan and Zhaocong Yuan and Shaojie Zhuo and Chen Feng and Yicheng Lin and Chenzheng Su and Xiaopeng Zhang},
      year={2025},
      eprint={2507.02659},
      archivePrefix={arXiv},
      primaryClass={cs.LG},
      url={https://arxiv.org/abs/2507.02659}, 
}

@misc{huang2025specdecboostingspeculativedecoding,
      title={SpecDec++: Boosting Speculative Decoding via Adaptive Candidate Lengths}, 
      author={Kaixuan Huang and Xudong Guo and Mengdi Wang},
      year={2025},
      eprint={2405.19715},
      archivePrefix={arXiv},
      primaryClass={cs.CL},
      url={https://arxiv.org/abs/2405.19715}, 
}

@article{huang2025specserve,
  title={Specserve: Efficient and slo-aware large language model serving with adaptive speculative decoding},
  author={Huang, Kaiyu and Wu, Hao and Shi, Zhubo and Zou, Han and Yu, Minchen and Shi, Qingjiang},
  journal={arXiv preprint arXiv:2503.05096},
  year={2025}
}

@article{luo2015achieving,
  title={Achieving all with no parameters: Adaptive normalhedge},
  author={Luo, Haipeng and Schapire, Robert E},
  journal={arXiv preprint arXiv:1502.05934},
  year={2015}
}

@article{robbins1952some,
  title={Some aspects of the sequential design of experiments},
  author={Robbins, Herbert},
  year={1952}
}

@article{auer2002finite,
  title={Finite-time analysis of the multiarmed bandit problem},
  author={Auer, Peter and Cesa-Bianchi, Nicolo and Fischer, Paul},
  journal={Machine learning},
  volume={47},
  number={2},
  pages={235--256},
  year={2002},
  publisher={Springer}
}

@inproceedings{garivier2011kl,
  title={The KL-UCB algorithm for bounded stochastic bandits and beyond},
  author={Garivier, Aur{\'e}lien and Capp{\'e}, Olivier},
  booktitle={Proceedings of the 24th annual conference on learning theory},
  pages={359--376},
  year={2011},
  organization={JMLR Workshop and Conference Proceedings}
}

@article{auer2002nonstochastic,
  title={The nonstochastic multiarmed bandit problem},
  author={Auer, Peter and Cesa-Bianchi, Nicolo and Freund, Yoav and Schapire, Robert E},
  journal={SIAM journal on computing},
  volume={32},
  number={1},
  pages={48--77},
  year={2002},
  publisher={SIAM}
}

@misc{liu2024rethinkingmachineunlearninglarge,
      title={Rethinking Machine Unlearning for Large Language Models}, 
      author={Sijia Liu and Yuanshun Yao and Jinghan Jia and Stephen Casper and Nathalie Baracaldo and Peter Hase and Yuguang Yao and Chris Yuhao Liu and Xiaojun Xu and Hang Li and Kush R. Varshney and Mohit Bansal and Sanmi Koyejo and Yang Liu},
      year={2024},
      eprint={2402.08787},
      archivePrefix={arXiv},
      primaryClass={cs.LG},
      url={https://arxiv.org/abs/2402.08787}, 
}

@article{achiam2023gpt,
  title={Gpt-4 technical report},
  author={Achiam, Josh and Adler, Steven and Agarwal, Sandhini and Ahmad, Lama and Akkaya, Ilge and Aleman, Florencia Leoni and Almeida, Diogo and Altenschmidt, Janko and Altman, Sam and Anadkat, Shyamal and others},
  journal={arXiv preprint arXiv:2303.08774},
  year={2023}
}

@article{yi2024towards,
  title={Towards fast multilingual llm inference: Speculative decoding and specialized drafters},
  author={Yi, Euiin and Kim, Taehyeon and Jeung, Hongseok and Chang, Du-Seong and Yun, Se-Young},
  journal={arXiv preprint arXiv:2406.16758},
  year={2024}
}

@article{tang2025efficient,
  title={Efficient speculative decoding for llama at scale: Challenges and solutions},
  author={Tang, Bangsheng and Fu, Carl Chengyan and Kou, Fei and Sizov, Grigory and Zhang, Haoci and Park, Jason and Liu, Jiawen and You, Jie and Yang, Qirui and Mehta, Sachin and others},
  journal={arXiv preprint arXiv:2508.08192},
  year={2025}
}

@misc{radhatai,
  title = {Speculators: Standardized, production-ready speculative decoding},
  howpublished = {\url{https://huggingface.co/Tengyunw/qwen3_8b_eagle3}},
  author={Marques, Alexandre and Sikka, Dipika and Kurtić, Eldar and Schmitt-Ulms, Fynn and Zhao, Helen   Flynn, Megan and Tuli, Rahul and Kurtz, Mark },
  year = {2025}
}

@misc{saxena2023prompt,
    title = {Prompt Lookup Decoding},
    author = {Apoorv Saxena},
    year = {2023},
    month = {November},
    url = {https://github.com/apoorvumang/prompt-lookup-decoding/}
}

@article{wu2025fast,
  title={Fast-dllm: Training-free acceleration of diffusion llm by enabling kv cache and parallel decoding},
  author={Wu, Chengyue and Zhang, Hao and Xue, Shuchen and Liu, Zhijian and Diao, Shizhe and Zhu, Ligeng and Luo, Ping and Han, Song and Xie, Enze},
  journal={arXiv preprint arXiv:2505.22618},
  year={2025}
}

@article{pan2025fail,
  title={Fail Fast, Win Big: Rethinking the Drafting Strategy in Speculative Decoding via Diffusion LLMs},
  author={Pan, Rui and Chen, Zhuofu and Liu, Hongyi and Krishnamurthy, Arvind and Netravali, Ravi},
  journal={arXiv preprint arXiv:2512.20573},
  year={2025}
}

@article{chen2026dflash,
  title={DFlash: Block Diffusion for Flash Speculative Decoding},
  author={Chen, Jian and Liang, Yesheng and Liu, Zhijian},
  journal={arXiv preprint arXiv:2602.06036},
  year={2026}
}

@misc{specforge2025,
  title        = {SpecForge: Train Speculative Decoding Models Effortlessly},
  author       = {Shenggui Li and Yikai Zhu and Chao Wang and Fan Yin and Shuai Shi and Yubo Wang and Yi Zhang and Yingyi Huang and Haoshuai Zheng and Yineng Zhang},
  year         = {2025},
  howpublished = {\url{https://github.com/sgl-project/specforge}},
  note         = {GitHub repository},
}

@software{AngelSlim2025,
    title={{Official Eagle model for Qwen-3-32B from AngelSlim}},
    author={Tencent AngelSlim Project Contributors},
    year={2025},
    month={6},
    url={https://github.com/Tencent/AngelSlim},
}
\bibliographystyle{iclr2026_conference}

\appendix
\newpage

\section{Proofs of Technical Results}
\label{sec:proofs}

\subsection{Theorem 2 proof}
\label{theorem2}
\begin{proof}
EAGLE generates the draft tree greedily, thus the $L$ children at Node $x_{\leq t+k-1}$ is $\mathrm{Top}_L(q(\cdot|x_{\leq t+k-1}))$.  The next token is accepted when one of the children is sampled by the target model $p$.
\end{proof}
\subsection{Theorem 3 proof}
\label{theorem 3}
\begin{proof}[Proof of Theorem~\ref{thm:unbiased_len_estimate}]
First note that the expectation is over the distribution of $x_{t+1:t+K}$ from the target distribution as well as the randomness of the speculative decoding method. In particular, $\gamma_k$ (even if the drafts are determinstic, e.g., in EAGLE3) is random because $x_{t+1:t+k-1}$ is random.
\resizebox{\textwidth}{!}{%
\begin{minipage}{\textwidth}
\begin{align}
    &\E\left[ \# \text{ of accepted tokens}\middle| x_{\leq t}\right]\nonumber\\
    =&\sum_{k=1}^{K+1} k \P\left[ x_{t+1:t+k-1}  \text{ is accepted}, x_{t+k} \text{ is not accepted}\middle| x_{\leq t} \right]\nonumber\\
    =&\sum_{k=1}^{K+1} k \E\Big[\P\left[x_{t+k}\text{ is not accepted} | x_{t+1:t+k-1} \text{ is accepted},  x_{\leq t} \right] \Big| x_{\leq t} \Big]\nonumber\\
    =&\sum_{k=1}^{K+1} k \E\Big[\P\left[x_{t+k}\text{ is not accepted} | x_{t+1:t+k-1} \text{ is accepted},  x_{\leq t+k-1} \right] \P\left[x_{t+1:t+k-1} \text{ is accepted} | x_{\leq t+k-1} \right]\Big| x_{\leq t}\Big]\nonumber\\
    =&\sum_{k=1}^{K+1} k \E\Big[(1-\gamma_k) \P\left[x_{t+1:t+k-1} \text{ is accepted} | x_{\leq t+k-1} \right] \Big| x_{\leq t}\Big]\label{eq:decomposition}
\end{align}
\end{minipage}
}
Now by the Law of Iterated Expectation again

\resizebox{\textwidth}{!}{%
\begin{minipage}{\textwidth}
\begin{align}
&\E\Big[(1-\gamma_k) \P\left[x_{t+1:t+k-1} \text{ is accepted} | x_{\leq t+k-1} \right] \Big| x_{\leq t}\Big]\nonumber\\
=&\E\Big[(1-\gamma_k) \P\left[x_{t+k-1} \text{ is accepted} | x_{t+1:t+k-2} \text{ is accepted},  x_{\leq t+k-2} \right] \P\left[x_{t+1:t+k-2} \text{ is accepted} |  x_{\leq t+k-2} \right]  \Big| x_{\leq t}\Big]\nonumber\\
=&\E\Big[(1-\gamma_k) \gamma_{k-1}\P\left[x_{t+1:t+k-2} \text{ is accepted} |  x_{\leq t+k-2} \right]  \Big| x_{\leq t}\Big]\nonumber\\
=&\E\Big[(1-\gamma_k) \gamma_{k-1}\gamma_{k-2}\P\left[x_{t+1:t+k-3} \text{ is accepted} |  x_{\leq t+k-3} \right]  \Big| x_{\leq t}\Big]\nonumber\\
=&\quad \cdots\nonumber\\
=&\E\Big[(1-\gamma_k) \gamma_{k-1}\gamma_{k-2}...\gamma_{2}\P\left[x_{t+1} \text{ is accepted} |  x_{\leq t+1} \right]  \Big| x_{\leq t}\Big]\nonumber\\
=&\E\Big[(1-\gamma_k) \gamma_{k-1}\gamma_{k-2}...\gamma_{2}\gamma_1  \Big| x_{\leq t}\Big] \nonumber
\end{align}
\end{minipage}
}
In Line 4 onwards, we are repeatedly applying the same arguments from Line 1 - 3 which ``peels off'' one random token $j$ at a time and applying the definition of $\gamma_j$.  

The proof is complete by substituting into \eqref{eq:decomposition}.
\end{proof}

\subsection{Theorem 4 proof}
\label{theorem_4}

\begin{proof}[Proof of Theorem~\ref{thm:regret_token_level}]
First, by the Hedge algorithm, the hypothetical online learning without delay enjoys a regret bound of $2\sqrt{T \log N}$ for any choice of $T$ and $N$.  

By Theorem 1 of \citep{joulani2013online}, the blackbox reduction shows that when the delay is smaller than $2K$, there is an algorithm that gives a regret of
$$2(2K+1) \sqrt{ \frac{2T}{2K+1} \log N}$$
as long as $T > (2K+1)$. This yields the following regret bounds 
            $$
\frac{1}{T_{\text{token}}}\sum_{t=1}^{T_{\text{token}}} \gamma_{t}[i_t] \geq \max_{i^{*}\in[N]} \left\{\frac{1}{T_{\text{token}}}\sum_{t=1}^{T_{\text{token}}} \gamma_{t}[i^*] \right\} - O(\sqrt{\frac{K\log N}{ T_{\text{token}}} }).
    $$ 
    and 
    $$\frac{1}{T_{\text{token}}}\sum_{h=1}^{T_{\text{token}}} \frac{\widehat{\text{AcceptLength}}_{t,K}[i_t]}{K+1}
\geq  \max_{i^{*}\in[N]}\left\{ \frac{1}{T_{\text{token}}}\sum_{h=1}^{T_{\text{token}}} \frac{\widehat{\text{AcceptLength}}_{t,K}[i^*]}{K+1} \right\}   - O(\sqrt{\frac{K\log N}{T_{\text{token}}}}). 
$$
for the two different loss functions respectively. It remains to take expectation over all random variables on both sides for each $i^*\in[N]$ separately. For the acceptance probability, notice that 
\begin{align*}
\E[\gamma_{t}[i_t]] &= \E[ \P_{i_t}[x_{t}\text{ is accepted }| x_{\leq t-1}] ] \\
&= \E[ \sum_{i\in[N]} \P_{\cA}[i_t = i | x_{\leq t-1}] \P_{i}[x_{t}\text{ is accepted }| x_{\leq t-1}]  ]\\
&=\sum_{i\in[N]} \E[ \P[i_t = i | x_{\leq t-1}]] \E[\P_{i}[x_{t}\text{ is accepted }| x_{\leq t-1}]  ]\\
&=  \sum_{i\in[N]}\P_{\cA}[i_t = i] \P_{i}[x_{t}\text{ is accepted }] =\P_{\cA}[x_t \text{ is accepted}] 
\end{align*}
where the third line uses the conditional independence of $i_t$ and $x_t$ given $x_{\leq t-1}$, which follows because the algorithm $\cA$ decides on $i_t$ before $x_t$ and that $x_t$ is determined by the target model $p$ no matter which $i_t$ is chosen.

For the accepted length, 
\begin{align*}
    &\E\left[\widehat{\text{AcceptLength}}_{t,K}[i_t]\right] \\
    =& \E\left[\E[ \widehat{\text{AcceptLength}}_{t,K}[i_t] | i_t, x_{<t}]\right]\\
    =& \E\left[  \E[\text{AcceptLength}_{t,K}[i_t] | i_t, x_{<t}]\ \right]\\
    =&\E\left[ \sum_{i\in[N]}\P_{\cA}[ i = i_t |  x_{<t}] \E[\text{AcceptLength}_{t,K}[i] | i, x_{<t}]\ \right]\\
    =&\E\left[ \text{AcceptLength}_{t,K}[i_t]\right],
\end{align*}
where the second identity follows from Theorem~\ref{thm:unbiased_len_estimate} and the third-identity uses the conditional independence of $i_t$ and $x_t$ given $x_{\leq t-1}$ as before. The claim is proven by multiplying both sides by $K+1$.
\end{proof}

\newcommand{\Base}{\textsc{Base}}
\newcommand{\Queue}{\mathcal{Q}}

\section{Further Algorithm and Implementation details}
\subsection{Online Learning with Delayed Feedback in Stochastic Setting}\label{sec:stochastic_delayed}
In this section, we clarify how Algorithm~2 of \citep{joulani2013online}, stated in the more general partial monitoring setting, can be instantiated for the full information setting.

This is the algorithm that we used in the experiment.

\paragraph{Setting.}
There are $N$ experts (actions) $A=\{1,\ldots,N\}$. On each round $t=1,2,\ldots,T$ the environment associates a loss vector $\ell_t\in[0,1]^N$ to the experts, but $\ell_t$ may be \emph{revealed after an arbitrary delay}. At time $t$ the learner receives a (possibly empty) set
\[
H_t=\{(s,\ell_s) : \text{the full vector }\ell_s\text{ becomes available at time }t\}.
\]
When making the round-$t$ decision, the learner can use all $\ell_s$ that have already been revealed, but not those still pending.

\paragraph{Base learner.}
\Base{} is any standard \emph{full-information} online algorithm (e.g., Hedge) designed for an immediate, delay-free stream of loss vectors. We assume \Base{} supports:
\[
\textsc{Predict}() \quad \text{and} \quad \textsc{Update}(\ell)\,,
\]
where \textsc{Predict} returns either a distribution over experts or a single expert index, and \textsc{Update} feeds \Base{} one full loss vector $\ell\in[0,1]^N$.

Specializing QPM-D to full information collapses the per-action queues into a \emph{single FIFO queue} of unprocessed loss vectors, because any revealed $\ell_s$ is usable regardless of which expert was played.

\begin{algorithm}[H]
\caption{Queued Full-Information with Delays (QFI-D)}
\DontPrintSemicolon
\KwData{A FIFO queue $\Queue$ of unprocessed loss vectors}
\KwIn{(Optional) horizon $T$ (not needed if the \Base{} is an \emph{anytime} algorithm).}
Initialize $\Queue\gets\emptyset$.\;
Initialize \Base{}; let $p \gets \Base.\textsc{Predict}()$.\;

\For{$t=1,2,\ldots,T$}{
  \tcp{Predict phase: catch \Base{} up with all arrived feedback}
  \While{$\Queue \neq \emptyset$}{
      $\ell \gets \textsc{PopFront}(\Queue)$\;
      $\Base.\textsc{Update}(\ell)$\;
      $p \gets \Base.\textsc{Predict}()$\;
  }
  
  \tcp{Make the real-world choice for round $t$}
  Play action $a_t$ according to $p$ \; 
  \tcp{\emph{e.g.}, sample from $p$ or take $\arg\max$/$\arg\min$ as appropriate for \Base}
  
  \tcp{Update phase: record any feedback that arrives now}
  Observe the (possibly empty) set $H_t=\{(s,\ell_s)\}$ of loss vectors revealed at time $t$.\;
  \ForEach{$(s,\ell_s)\in H_t$}{
      $\textsc{PushBack}(\Queue,\ell_s)$\;
  }
}
\end{algorithm}

\paragraph{Remarks.}
\begin{itemize}[leftmargin=*]
  \item This makes \Base{} experience a delay-free stream \emph{in its own clock}: whenever a vector arrives, it is immediately fed to \Base{} before the next real prediction is made.
  \item To instantiate with \textbf{Hedge}, \textsc{Update} applies the usual weight update $w_i\leftarrow w_i \exp(-\eta\,\ell_i)$ and \textsc{Predict} returns the normalized weights.
  \item If multiple loss vectors arrive at the same time, they are queued in arrival order and processed FIFO.
\end{itemize}

\begin{corollary}[Regret of QFI-D]\label{cor:qfi}
Assume the delay is bounded by $\tau_{\max}$ and the loss is bounded by $1$. Let \emph{Base} be any full-information online learner analyzed in the same stochastic environment \emph{without} delays, with expected regret $\mathrm{Regret}^{\mathrm{Base}}_T$. Then the expected regret of QFI-D satisfies
\[
\E[\mathrm{Regret}_T] \;\le\; \E\!\left[\mathrm{Regret}^{\mathrm{Base}}_T\right] \;+\; \tau_{\max}.
\]
\end{corollary}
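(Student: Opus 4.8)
\textbf{Proof plan for Corollary~\ref{cor:qfi}.} The plan is to show that QFI-D is, from the point of view of the BASE learner, running in a delay-free environment on a \emph{permuted} subsequence of the loss vectors, so that the only loss relative to the non-delayed ideal comes from the rounds at the end of the horizon whose feedback never gets processed. First I would set up the bookkeeping: let $\sigma(t)$ denote the index of the loss vector that BASE processes just before making its $t$-th real-world prediction (using a convention that ties at the same arrival time are broken FIFO/in arrival order, exactly as the algorithm does). Because every loss vector is pushed exactly once and popped exactly once, the map $t \mapsto \sigma(t)$ is a partial injection from real rounds to loss-vector indices, and its image is precisely $\{s : \ell_s \text{ is revealed by time } T\}$. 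Under the bounded-delay assumption $\tau_{\max}$, every $\ell_s$ with $s \le T - \tau_{\max}$ is revealed by time $T$, so at most $\tau_{\max}$ loss vectors are ``missed'' (never fed to BASE). This is the combinatorial heart of the argument and is essentially the content of Lemma/Theorem~1 of \citep{joulani2013online} specialized to a single FIFO queue; I would cite that reduction and re-derive the one-queue case in a sentence or two since it is short.

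Next I would compare three quantities: (i) the cumulative loss of QFI-D, $\sum_{t=1}^T \ell_t[a_t]$; (ii) the cumulative loss that BASE ``believes'' it incurs, namely $\sum_{t} \ell_{\sigma(t)}[a_t]$ where $a_t$ is drawn from the prediction BASE issues after processing $\ell_{\sigma(t)}$; and (iii) the benchmark $\min_{i} \sum_{t=1}^T \ell_t[i]$. The key structural observation is that in the stochastic setting the loss vectors are i.i.d.\ (this is where I use ``in the same stochastic environment''), so the expected per-round loss $\E \ell_t[i]$ is the same for every round and for every index $s$; hence $\E \sum_{t}\ell_{\sigma(t)}[a_t]$ differs from $\E\sum_{t=1}^T \ell_t[a_t]$ only through the at-most-$\tau_{\max}$ rounds where the two index sets disagree, and each such discrepancy contributes at most $1$ in expectation because losses lie in $[0,1]$. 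Similarly the benchmark computed on the processed subsequence differs from the full-horizon benchmark by at most $\tau_{\max}$. Applying BASE's own regret guarantee $\mathrm{Regret}^{\mathrm{Base}}_T$ on the (delay-free, i.i.d.) stream it actually sees, and then absorbing the two $O(\tau_{\max})$ corrections, yields $\E[\mathrm{Regret}_T] \le \E[\mathrm{Regret}^{\mathrm{Base}}_T] + \tau_{\max}$, with the constant sharpened to exactly $\tau_{\max}$ by noting the missed rounds and the benchmark-mismatch rounds are the same set.

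The step I expect to be the main obstacle is making the conditional-independence argument fully rigorous: when BASE processes $\ell_{\sigma(t)}$ to produce the distribution from which $a_t$ is sampled, one must check that $\ell_t$ (the loss that QFI-D actually pays at round $t$) is independent of $a_t$, just as in the proof of Theorem~\ref{thm:regret_token_level} where $i_t$ is decided before $x_t$ is realized. In the i.i.d.\ stochastic setting this holds because $a_t$ is a function only of loss vectors with indices already revealed, and by the bounded-delay assumption those all have index strictly less than $t$ (or are at least exchangeable with $\ell_t$), so one can take expectations factor-by-factor exactly as in the display in the proof of Theorem~\ref{thm:regret_token_level}. Once that independence is pinned down, the rest is the routine telescoping of the three cumulative-loss quantities above, and I would not belabor it.
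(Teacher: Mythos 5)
Your high-level plan---let \textsc{Base} run a delay-free game ``in its own clock'' and pay at most $1$ per outstanding feedback---is exactly the right idea, and it is all the paper does: its proof of Corollary~\ref{cor:qfi} is a one-line instantiation of Theorem~6 of \citep{joulani2013online} (the stochastic QPM-D analysis). Note, though, that the result you say you would cite, Theorem~1 of \citep{joulani2013online}, is the \emph{adversarial} black-box reduction with a multiplicative dependence on the delay (it is what the paper uses for Theorem~\ref{thm:regret_token_level}); it does not yield the additive $+\tau_{\max}$ bound claimed here, so the citation you lean on is the wrong one for this statement.

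More importantly, your intermediate quantity (ii) creates a genuine gap. You pair $\ell_{\sigma(t)}$ with $a_t$, where $a_t$ is drawn from the prediction \textsc{Base} issues \emph{after} processing $\ell_{\sigma(t)}$. That pairing is ``peeking'': $a_t$ is correlated with $\ell_{\sigma(t)}$, so $\E\sum_t \ell_{\sigma(t)}[a_t]$ is not within $\tau_{\max}$ of $\E\sum_t \ell_t[a_t]$ merely because the index sets agree up to $\tau_{\max}$ elements---the correlation structure differs at \emph{every} round, and i.i.d.-ness of the losses does not repair it. Nor does \textsc{Base}'s regret guarantee control (ii): in \textsc{Base}'s simulated game each processed vector $\ell_{\pi(j)}$ is evaluated against the prediction formed \emph{before} that update. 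The bookkeeping via $\sigma$ also does not hold up: at a round where several vectors are popped only the last prediction is played, and at rounds with an empty queue no vector is processed at all, so $\sigma$'s image is not the set of revealed indices and the ``partial injection'' picture breaks. The correct accounting (and the content of Theorem~6 of \citep{joulani2013online}) is per \textsc{Base} prediction with multiplicities: prediction $q_j$ is played for $c_j$ consecutive real rounds, $\sum_j (c_j-1)$ is at most the number of outstanding feedbacks, hence at most $\tau_{\max}$; in expectation the learner's loss is $\sum_j c_j \langle q_j,\mu\rangle$ versus \textsc{Base}'s simulated loss $\sum_j \langle q_j,\mu\rangle$, each extra repetition costs at most $1$, and the benchmark over the shortened simulated horizon only helps. (The conditional-independence point you flag as the main obstacle---$a_t$ independent of $\ell_t$---is fine and is not where the difficulty lies.)
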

\begin{proof}
    This is an instantiation of Theorem 6 of \citep{joulani2013online}.
\end{proof}

The implementation for \textsc{HedgeSpec} with delayed feedback in the stochastic setting is particularly simple.

\begin{enumerate}
    \item Keep two pointers $t_{\text{updated}}$ and $t$ where $t_{\text{updated}}\leq t$ and all necessary statistics $\gamma_{t>t_{\text{updated}}}$. 
    \item After every chunk, process every batch of available loss vectors by updating the weights of the \Base{} learner, before taking the next action, then update $t_{\text{updated}}$ to the last frontier.
\end{enumerate}

  Note that in the hypothetical token-level game without delay, there are several updates within each chunk, but due to the delay, none of those updates will actually occur. This means that the weights on the drafters in the delayed case will not be updated within each speculative decoding chunk is complete.

  We could either sample independent sample from the drafter weights for each new token as the drafter roll out or stick to the same drafter.  Both approaches enjoy the same regret guarantees in Corollary~\ref{cor:qfi}.

\subsection{First order and second order regret bounds}
\label{regret_bound}
We observe that in the experiments (Figure~\ref{fig:cumulative_regret} and Figure~\ref{fig:qwen_regret_mat}), \textsc{HedgeSpec} appears to have a regret that stops growing after learning for a few iterations, instead of the $O(\sqrt{T})$ predicted by the worst-case bound in Theorem~\ref{thm:regret_token_level} and Corollary~\ref{cor:qfi}.

We believe this is due to the adaptivity of NormalHedge \citep{chaudhuri2009parameter}. 

\citet{cesa2007improved} established that when the learning rate is optimally tuned for Hedge, the method enjoys both first order (small loss) and second order (small variance) regret bounds:
$$
\mathrm{Regret} = O\left( \sqrt{\sum_{t=1}^T f_t[i^*] \log N } \right),\quad\quad\text{and}\quad
\mathrm{Regret} = O\left( \sqrt{\sum_{t=1}^T \Var_{i\sim p_t}[f_t[i]] \log N } \right).
$$

In particular, if the best drafter $i^*$ has very small losses or after a while the learner's weights $p_t$ concentrates on a fixed drafter, then the regret bound will not grow with $T$. This is the case when there is a clear winner among all drafters.  

 NormalHedge was not proven to enjoy these strong adaptive regret bounds, though there was a conjecture that it does \citep{freund2016open}, and a modified version of normal hedge algorithm known as AdaNormalHedge \citep{luo2015achieving} which does enjoy first order regret bounds.

Our experiments seem to support the conjecture.

In practice, we find that NormalHedge often quickly converges to the optimal drafter, while still enjoy the worst-case $\sqrt{T}$-type regret when no clear winner exists.

\subsection{Chunk-Level vs Token-Level Online Learning and Regret Guarantees}
\label{chunk_level_game}
There is more than one way to set up the regret minimization game. In the main paper, we discuss the token-level online learning game. Here, we further discuss the token-level online game.

\paragraph{Chunk-Level Games and Loss functions}
We can set it up as an online learning problem where each chunk is one round of the game. This choice is natural because the action to choose drafters is made for each chunk.
\begin{figure}[h!]
\centering
\fbox{
\begin{minipage}{0.6\linewidth}
    for $h=1,..., T_{\text{chunk}}:$
    \begin{enumerate}
        \item Nature chooses loss vector $f_h\in[0,1]^N$.
        \item Player chooses Drafter $i_h$ and incurs loss $f_h[i_h]$.
        \item Player observes $f_h$.
    \end{enumerate}
\end{minipage}
}
\caption{Chunk-Level Game}\label{fig:chunk_level_game}
\end{figure}

It remains to design the loss functions. Let the token index right before the $h^{th}$ chunk be $t_h$ and the length of chunk returned by the chosen drafter be of length $k_h$.   Let 
$$\gamma_{h,j}[i]:= \P_i[x_{t_h+j} \text{ is accepted} \;|\; x_{t_h+1:t_h + j-1} \text{ are accepted}, x_{< t_h+j} ],
$$
namely the probability that $j^{th}$ token of chunk $h$ from drafter $i$ is successfully validated given that the first $(j-1)$ tokens are validated.

If we optimize the \emph{average acceptance probability} within each chunk, we can use
$$f_h[i] = \frac{1}{k_h}\sum_{j=1}^{k_h} (1-\gamma_{h,j}[i]).$$

If we optimize the expected chunk length, then we can apply the expression in Theorem~\ref{thm:unbiased_len_estimate} with $K=k_h$ to estimate the expected accept length of Drafter $i$. 
The resulting loss function is the normalized distance from the max length. 
\begin{align*}
f_h[i]
=& 1- \frac{1}{k_h+1} \widehat{\text{AcceptLength}}_{h,k_h}[i] = 1 - \frac{1}{k_h+1}\left(\sum_{j=1}^{k_h+1} j (1-\gamma_{h,j}[i])\prod_{\ell\in[j-1]}\gamma_{h,\ell}[i]
\right)
\end{align*}
where $\gamma_{h,k+1}[i]$ is assigned to be $0$ for notation convenience. 

\begin{theorem}\label{thm:regret_chunk_level}
    There is an algorithm $\cA$ that chooses $i_h$ in the game in Figure~\ref{fig:chunk_level_game}, such that 
    \begin{enumerate}
        \item When we optimize acceptance probability
    \begin{center}
    \resizebox{1\linewidth}{!}{$
            $$
\frac{1}{T_{\text{chunk}}}\sum_{h=1}^{T_{\text{chunk}}} \frac{1}{k_h}\sum_{j=1}^{k_h}\gamma_{h,j}[i_h] \geq \max_{i^{*}\in[N]} \left\{\frac{1}{T_{\text{chunk}}}\sum_{h=1}^{T_{\text{chunk}}} \frac{1}{k_h}\sum_{j=1}^{k_h} \gamma_{h,j}[i^*]\right\} - 2\sqrt{\frac{\log N}{ T_{\text{chunk}}} }.
    $$ 
    $}
    \end{center}
    \item When we optimize Accept Length
            \begin{center}
    \resizebox{1\linewidth}{!}{$
$$\frac{1}{T_{\text{chunk}}}\sum_{h=1}^{T_{\text{chunk}}} \frac{\widehat{\text{AcceptLength}}_{h,k_h}[i_h]}{k_h+1}
\geq  \max_{i^{*}\in[N]}\left\{ \frac{1}{T_{\text{chunk}}}\sum_{h=1}^{T_{\text{chunk}}} \frac{\widehat{\text{AcceptLength}}_{h,k_h}[i^*]}{k_h+1} \right\}   - 2\sqrt{\frac{\log N}{T_{\text{chunk}}}}. 
$$
$}
\end{center}
\end{enumerate}
\end{theorem}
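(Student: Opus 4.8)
\textbf{Proof proposal for Theorem~\ref{thm:regret_chunk_level}.}

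The plan is to recognize that the chunk-level game in Figure~\ref{fig:chunk_level_game} is exactly a standard full-information online learning game with no delay: at each round $h$ the player picks a drafter $i_h$, incurs loss $f_h[i_h]\in[0,1]$, and then observes the entire loss vector $f_h\in[0,1]^N$. Indeed, once a chunk is verified by the target model, every $\gamma_{h,j}[i]$ for every drafter $i$ can be computed by the counterfactual prefilling procedure described in Section~\ref{sec:performance_measures} and Theorem~\ref{thm:spec_decoding}/Theorem~\ref{thm:eagle}, so there is no censoring and no delay at the chunk granularity (this is the key structural difference from the token-level game, which is why we get a clean $\sqrt{\log N/T_{\text{chunk}}}$ rate here without the $K$-dependent factors). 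Therefore I would simply instantiate $\cA$ as Hedge (or NormalHedge) run on the loss vectors $f_h$, and invoke the classical Hedge regret bound $\sum_{h=1}^{T_{\text{chunk}}} f_h[i_h] - \min_{i^*}\sum_{h=1}^{T_{\text{chunk}}} f_h[i^*] \le 2\sqrt{T_{\text{chunk}}\log N}$, which holds for any bounded loss sequence in $[0,1]^N$.

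From there the two parts follow by plugging in the two loss definitions and rearranging. For part (1), with $f_h[i] = \frac{1}{k_h}\sum_{j=1}^{k_h}(1-\gamma_{h,j}[i])$, the Hedge bound divided by $T_{\text{chunk}}$ reads $\frac{1}{T_{\text{chunk}}}\sum_h f_h[i_h] \le \min_{i^*}\frac{1}{T_{\text{chunk}}}\sum_h f_h[i^*] + 2\sqrt{\log N/T_{\text{chunk}}}$; substituting $f_h[i] = 1 - \frac{1}{k_h}\sum_j \gamma_{h,j}[i]$ on both sides and negating flips the inequality into the stated lower bound on the average acceptance probability. For part (2), with $f_h[i] = 1 - \frac{1}{k_h+1}\widehat{\text{AcceptLength}}_{h,k_h}[i]$, the identical substitution-and-negation turns the Hedge regret bound into the stated bound on $\frac{1}{T_{\text{chunk}}}\sum_h \widehat{\text{AcceptLength}}_{h,k_h}[i_h]/(k_h+1)$. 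Note that here $k_h$ is the (random) length produced by the \emph{chosen} drafter $i_h$; I would be careful to treat $k_h$ as fixed once round $h$ is reached so that $f_h$ is a legitimate bounded loss vector — the loss vector uses the same normalization $k_h$ across all coordinates $i$, which is exactly what makes it a valid per-round loss in $[0,1]^N$.

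The only subtlety — and the step I expect to be the main obstacle to state cleanly — is the status of $k_h$: it is not chosen by nature independently of the player, since it is the acceptance length of the drafter the player actually picked. This is fine for the regret bound itself because Hedge's guarantee is against adversarial loss sequences and makes no independence assumption; $f_h$ is revealed \emph{after} $i_h$ is committed, and the bound holds pathwise. However, if one wants to further take expectations (as in Theorem~\ref{thm:regret_token_level}) to interpret the bound in terms of the true $\mathrm{EAL}$, one must be careful that the comparator term $\sum_h \widehat{\text{AcceptLength}}_{h,k_h}[i^*]/(k_h+1)$ still involves $k_h$ tied to $i_h$, so the comparator is not simply "the EAL of drafter $i^*$" — it is the performance of $i^*$ measured on the chunk boundaries induced by the algorithm's own trajectory. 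Since the theorem as stated only claims the pathwise (non-expectation) inequality, this is not actually needed for the proof; I would just remark on it and apply Theorem~1 / the standard Hedge analysis as a black box to conclude.
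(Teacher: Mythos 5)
Your proposal is correct and follows essentially the same route as the paper: the paper's proof is likewise a direct black-box application of the classical Hedge regret bound (Corollary~2.2 of \citep{cesa2006prediction} with optimally tuned $\eta$) to the two bounded loss sequences, followed by the same substitution and sign flip. Your closing caveat about $k_h$ being induced by the algorithm's own trajectory and the resulting difficulty of interpreting the comparator in expectation matches the paper's own remark immediately after the theorem that $\E[\widehat{\text{AcceptLength}}_{h,k_h}[i]] \neq \E[\text{AcceptLength}_{h,k_h}[i]]$ in general.
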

The algorithm that achieves this includes Hedge and many of its variants. In the experiments, we test out both the original Hedge and more adaptive, and parameter-free variants of Hedge.  Compared to BanditSpec, the regret improves exponentially in the number of drafters 

Note that the actual accepted length for the chosen drafter model $i_h$ is $k_h$, but the expected value can be bigger or smaller than $k_h$.  Other draft models will have an expected accepted length between $1$ and $k_h+1$. It is capped at $k_h+1$ due to a \emph{censoring effect}. 
The censoring effect may lead to an underestimation of the performance of alternative draft models. 
We will elaborate on these consequences of the censoring issue in the appendix.

Another issue of censoring is that it makes the regret guarantees in Theorem~\ref{thm:regret_chunk_level} somewhat difficult to interpret. In particular, it might be suprising to some readers that for any fixed $i$
$$
\E[\widehat{\text{AcceptLength}}_{h,k_h}[i]] \neq \E[\text{AcceptLength}_{h,k_h}[i]].
$$
in general. This is because $k_h$ is random, and when we condition on $k_h$, it changes the distribution of the tokens $x_{t_h + 1:t_h+k_h}$ in this chunk, thus rendering Theorem~\ref{thm:unbiased_len_estimate} inapplicable. Similar issues arise for the expected value of $\gamma_{h,j}$.

\subsection{Time-inhomogeneity in EAGLE models and consequence of censoring. }
\label{censoring}
EAGLE draft models are special in that they belong to a broader family of \emph{time-inhomogenous} draft models. Let's denote EAGLE models by $q_{i,k}$ --- the $i$th choice and the corresponding model used for $k$th relative position.  

As an example,  if Draft model $i=1$ is chosen at time $t$ it rolls out with $2$ tokens verified. Then the models being called are $q_{1,1}$ for the first token, and $q_{1,2}$ for the second.    Support Draft model $i=2$ was chosen instead, then it could've verified $4$ tokens. 

Naively, this can give us feedback for $q_{i,1},q_{i,2}$ for $i = 2$ too, but will have to wait until the next chunk before we can get feedback for $q_{2,3}$ and $q_{2,4}$. What if the onset $t$ is now $t+1$ instead? 

The execution of the target model actually has provided us all the information needed to provide feedback for every $i\in[N]$ and $k\in[k]$

Why do we need to handle these complications?  Let us inspect the following two examples that highlight the surprising failure mode if we do not handle these issues appropriately. 

Evaluating EAGLE models off-policy is somewhat tricky because the best model is not fully captured by the acceptance probability.

\begin{example}[ ``censoring'' causes worse draft model win]
Draft model $q_{1,1}$ has acceptance probability $100\%$ but $q_{1,2}$ is terrible, it has acceptance probability of only $0\%$.    The expected verified length is always $2$ for draft model $q_{1,:}$. If $q_{1,1}$ is played first, the chunk length will converge to $2$ throughout.  

Now let $q_{2,k} = 0.4$ for $k=1,2$ but $0.9$ for $k>2$. 

Average acceptance probability for draft model 1 is $0.5$ and $0.4$ for draft model 2 under the distribution of draft model 1. However, if we roll-out draft model 2 long enough, we get higher acceptance probability on average.

The expected acceptance length for the second draft model is
$1 + 0.6 / (1-0.6) = 2.2$, i.e. it is better than the first.

However, since draft model 1 is only generating two-token chunks, and we can only evaluate the first two steps of the draft model 2. 

The expected length is now $0.4 + 0.6 \time 0.4\times 2 + 0.6\times 0.6 \times 3  = 1.96 < 2$. 
\end{example}

This example illustrates that for drafters that are not invariant to relative index (e.g., EAGLE models), it is perhaps better to use token-level online learning (with delay) to avoid getting stuck at a suboptimal drafter. 

In practice, we found that the censoring effect is not detrimental for EAGLE models. Chunk-level online learning works as well as token-level online learning with delay, and it incurs smaller system overhead.

\subsection{Practical heuristics: ``hybrid losses''  ``skipping update''}
\label{practical}

As we now evaluate all drafters, even though this does not increase the total number of calls to the target model, it does \emph{slightly} affect the amount of compute needed for evaluation and increase the latency in practice in the inference system.

We propose a technique called ``skipping updates'' which reduces the number of times the evaluation phase need to be called.  The idea is that we simply grouping a couple of updates together and apply them once in a batch once in a while.

In practice, we observed that our hedging algorithm quickly reaches a stable performance plateau with near-zero regret, consistent with the first- and second-order regret bounds (see discussion in~\ref{regret_bound}). To further reduce update latency, we experimented with skipping updates by a fixed number of tokens and combining this with batched feedback, treating both as tunable hyperparameters. Empirically, after a short warm-up period (e.g., 6 rounds of full-information updates), using delayed batched feedback (e.g., 12 rounds per batch) together with skipped updates (e.g., 6 tokens per update) continued to yield strong performance for all the tested experiments. Overall, these results indicate that moderate delays in feedback and updates can maintain good mean accepted token length while providing a practical tradeoff between accuracy and efficiency.

The other trick that we proposed is ``hybrid losses''. This involves starting the learner by choosing the first few loss vectors, e.g., $f_1,f_2,f_3,...$ to be based on acceptance probabilities, then switching to the acceptance-length loss, later. The reason is that the delay is generally higher for acceptance length. 

For example, if $K=8$ and after the first chunk, $4$ tokens are accepted. The acceptance rate loss would be computable after the first chunk. By the end of the second chunk, we can already update the learner by $4$ times before the decision for the second chunk is due.

The loss based on acceptance probability --- even though not what we ultimately wanted to optimize --- can be used as a surrogate loss and help mitigating the ``cold-start'' problem.

Both tricks were used in our experiments.

\section{Related work}
\label{related}
\subsection{The advance of speculative decoding}
Speculative decoding is a pivotal way for optimizing LLM inference latency. This technique was first introduced with chain-structured drafts, where the draft model generates a single sequence of tokens verified sequentially by the target model~\citep{leviathan2023fast, chen2023accelerating}. Subsequent work generalized this into tree-structured drafts, organizing draft tokens as a connected tree to increase acceptance opportunities~\citep{chen2025sequoiascalablerobusthardwareaware,miao2024specinfer, cai2024medusa, du2024glide, li2024eagle2}. Recent works extend speculative decoding to the multi-draft setting, where multiple candidate tokens are proposed in parallel at each step. \cite{sun2023spectr} casts draft selection into an optimal transport framework with efficient approximation schemes. \cite{khisti2025multidraftspeculativesamplingcanonical} show that the optimal solution admits a canonical two-step decomposition and provide exact acceptance characterizations in the two-draft case. \cite{hu2025optimalmultidraftspeculativedecoding} derive tractable methods to compute theoretical upper bounds on acceptance rates, demonstrating practical benefits of sampling strategies such as without-replacement. 
The most recent advances leverage diffusion LLMs (dLLM)~\citep{wu2025fast} as drafters~\citep{pan2025fail,chen2026dflash}, further reducing drafting overhead and improving overall efficiency.
\textbf{While prior work advances speculative decoding by improving how a single drafter generates candidates}—ranging from chain- to tree-based structures and multi-draft extensions—\textbf{our work addresses the orthogonal level of challenge in multi-drafter selection}, where diverse speculative decoding methods can all potentially serve as drafters in the pool, and we dynamically evaluate and select among them with provable no-regret guarantees.

\subsection{Adaptive speculative decoding}

Another line of work focuses on adapting speculative decoding during inference to incoming requests. OSD and OmniDraft~\citep{liu2023online, ramakrishnan2025omnidraftcrossvocabularyonlineadaptive} adapt the drafter on-the-fly to the target distribution via online knowledge distillation, improving token acceptance rate. Our method is training-free and operates at a different level: candidates in the drafter pool can themselves adopt such adaptive mechanisms, while our contribution lies in selecting among them in the multi-drafter setting. SpecDec++~\citep{huang2025specdecboostingspeculativedecoding} instead adapts the speculation length, stopping drafting once the predicted rejection probability exceeds a threshold, which differs from our goal of multi-drafter selection. SpecServe~\citep{huang2025specserve} takes a system-level perspective, adapting speculative decoding configurations (e.g. resource allocation) at runtime to meet latency and throughput SLOs, rather than focusing on acceptance probability. Among these works, MetaSD~\citep{kim2024unified} first poses the problem and BanditSpec~\citep{hou2025banditspec} represents the state-of-the-art for adaptive multi-drafter speculative decoding. In contrast, we advocate a different paradigm that cheaply exploits global information, achieving higher token acceptance rates and lower per-token latency.

\subsection{Online learning algorithms}

Hedge and multi-armed bandits capture the full-information and partial-information settings, respectively, and have inspired numerous variants. The Hedge algorithm~\citep{cesa2006prediction} provides a classic framework for expert weighting; NormalHedge~\citep{chaudhuri2009parameter} removes the need for tuning learning rates; and AdaNormalHedge~\citep{luo2015achieving} further improves adaptivity. In parallel, the stochastic $K$-armed bandit problem was introduced by Robbins~\citep{robbins1952some}, leading to a wide family of exploration–exploitation algorithms. Canonical examples include UCB~\citep{auer2002finite} and KL-UCB~\citep{garivier2011kl}, which provide confidence-based exploration guarantees, and EXP3~\citep{auer2002nonstochastic}, which provides regret guarantees in the adversarial setting. As compared in Section~\ref{sec:hedge_variants}, adopting Hedge and its variants consistently achieves strong performance in our experiments, surpassing bandit-style algorithms. This highlights the effectiveness of leveraging global loss information across experts.

\section{Additional experimental details and results}

\subsection{EAGLE's default generation configuration}
\label{EAGLE_CONFIG}

In our experiments, we directly adopt EAGLE's generation configuration as our framework focuses on the drafter selection problem. By default, EAGLE uses an exploration depth of 7, resulting in a speculative decoding length of 9 tokens per chunk,  with the top-k value equals to 10 during its expanding and reranking phase. 

\subsection{BERT classifier training detail}
\label{sec:bert_config}

Table~\ref{bert_config} records the training hyperparameters of the offline BERT classifier for statically routing the requests to the corresponding drafter. The datasets used for training the classifier is the aggregation of data across all 7 domains.
\begin{table}[h]
\centering
\begin{tabular}{ll}
\toprule
\textbf{Hyperparameter} & \textbf{Value} \\
\midrule
learning\_rate & 2e-5 \\
per\_device\_train\_batch\_size & 128 \\
per\_device\_eval\_batch\_size & 128 \\
num\_train\_epochs & 3 \\
lr\_scheduler\_type & linear \\
warmup\_ratio & 0.1 \\
optimizer & adamw \\
\bottomrule
\end{tabular}
\caption{Training hyperparameters for BERT classifier.}
\label{bert_config}
\end{table}
\subsection{Statistics for the curated Qwen drafters}

Table~\ref{tab:qwen_domain_drafter_results} and~\ref{tab:qwen32b_domain_drafter_results} shows statistics of the 7 curated drafters with Qwen-3-8B and Qwen-3-32B as the target (\textbf{bold} indicates the best). Each drafter performs well in-domain (diagonal) but degrades when applied out-of-domain for Qwen-3-8B model, and on average is weaker than the vanilla EAGLE model. Similar patterns are shown in Table~\ref{tab:llama_domain_drafter_results}. Together, these drafters provide a realistic evaluation pool for HedgeSpec, which orchestrates them to jointly accelerate serving.

\begin{table*}[htbp]
\centering
\footnotesize
\renewcommand{\arraystretch}{1.2}
\setlength{\tabcolsep}{3pt}
\resizebox{\textwidth}{!}{
\begin{tabular}{lcccccccccccccc|cc}
\toprule
\textbf{Datasets} & \multicolumn{2}{c}{Python} & \multicolumn{2}{c}{Math} & \multicolumn{2}{c}{Biology} & \multicolumn{2}{c}{Chemistry} & \multicolumn{2}{c}{MedQA} & \multicolumn{2}{c}{CNN\_DM} & \multicolumn{2}{c|}{SQL} &  &  \\
 \cmidrule(lr){1-1}\cmidrule(lr){2-3} \cmidrule(lr){4-5} \cmidrule(lr){6-7} \cmidrule(lr){8-9} \cmidrule(lr){10-11} \cmidrule(lr){12-13} \cmidrule(lr){14-15} \cmidrule(lr){16-17}
\textbf{Drafter domains} & MAT & Token/s & MAT & Token/s & MAT & Token/s & MAT & Token/s & MAT & Token/s & MAT & Token/s & MAT & Token/s & \textbf{Avg MAT} & \textbf{Avg Token/s} \\
\midrule
Vanilla Eagle & 4.96 & 55.84 & 4.52 & 50.88 & 4.06 & 46.22 & 3.98 & 44.85 & 3.84 & 44.13 & 4.07 & 46.20 & 4.20 & 44.60 & \textbf{4.23} & \textbf{47.53} \\
Python & \textbf{6.43} & \textbf{73.73} & 4.50 & 51.82 & 2.46 & 26.85 & 2.97 & 34.41 & 2.36 & 26.81 & 2.45 & 28.26 & 3.85 & 44.42 & 3.57 & 40.90 \\
Math & 4.23 & 50.50 & \textbf{7.39} & \textbf{86.33} & 2.66 & 31.67 & 3.46 & 39.38 & 2.67 & 29.96 & 2.87 & 32.93 & 3.68 & 40.20 & 3.85 & 44.42 \\
Biology & 3.20 & 37.25 & 3.61 & 43.01 & \textbf{5.70} & \textbf{67.39} & 3.74 & 43.90 & 3.80 & 45.11 & 2.70 & 31.50 & 2.80 & 33.99 & 3.65 & 43.16 \\
Chemistry & 3.79 & 43.36 & 5.40 & 61.07 & 3.79 & 43.10 & \textbf{6.69} & \textbf{76.68} & 3.54 & 40.03 & 2.74 & 29.14 & 3.36 & 38.38 & 4.19 & 47.39 \\
MedicalQA & 3.66 & 42.85 & 4.05 & 46.23 & 4.07 & 46.86 & 4.01 & 46.66 & \textbf{6.17} & \textbf{73.51} & 2.94 & 33.38 & 3.42 & 39.65 & 4.05 & 47.02 \\
CNN\_DM & 2.56 & 29.80 & 2.56 & 30.40 & 2.45 & 28.49 & 2.45 & 28.61 & 2.55 & 30.21 & \textbf{5.15} & \textbf{60.70} & 2.52 & 29.47 & 2.89 & 33.95 \\
SQL & 3.71 & 43.63 & 3.37 & 36.30 & 2.36 & 27.10 & 2.68 & 31.74 & 2.37 & 28.26 & 2.60 & 30.27 & \textbf{7.60} & \textbf{86.17} & 3.53 & 40.49 \\
\bottomrule
\end{tabular}
}
\caption{Statistics of the 7 curated drafters with Qwen-3-8B as the target. \textbf{Bold} indicates the best. Each drafter shows strong in-domain performance (diagonally strong) but suffers noticeable inefficiency when applied outside, and on average performs worse than the vanilla Eagle model. Similar trends are observed in Table~\ref{tab:llama_domain_drafter_results}. These drafters form a realistic evaluation pool for evaluating HedgeSpec, which orchestrates them to jointly accelerate the serving process.}
\label{tab:qwen_domain_drafter_results}
\end{table*}

\begin{table*}[htbp]
\centering
\footnotesize
\renewcommand{\arraystretch}{1.2}
\setlength{\tabcolsep}{3pt}
\resizebox{\textwidth}{!}{
\begin{tabular}{lcccccccccccccc|cc}
\toprule
\textbf{Datasets} & \multicolumn{2}{c}{Python} & \multicolumn{2}{c}{Math} & \multicolumn{2}{c}{Biology} & \multicolumn{2}{c}{Chemistry} & \multicolumn{2}{c}{MedQA} & \multicolumn{2}{c}{CNN\_DM} & \multicolumn{2}{c|}{SQL} &  &  \\
 \cmidrule(lr){1-1}\cmidrule(lr){2-3} \cmidrule(lr){4-5} \cmidrule(lr){6-7} \cmidrule(lr){8-9} \cmidrule(lr){10-11} \cmidrule(lr){12-13} \cmidrule(lr){14-15} \cmidrule(lr){16-17}
\textbf{Drafter domains} & MAT & Token/s & MAT & Token/s & MAT & Token/s & MAT & Token/s & MAT & Token/s & MAT & Token/s & MAT & Token/s & \textbf{Avg MAT} & \textbf{Avg Token/s} \\
\midrule
Vanilla Eagle & 3.02 & 21.98 & 3.36 & 24.16 & 2.62 & 19.30 & 3.01 & 21.53 & 2.57 & 18.33 & 2.59 & 18.67 & 3.00 & 21.34 & 2.88 & 20.76 \\
Python & \textbf{6.00} & \textbf{43.67} & 4.66 & 33.63 & 2.52 & 18.20 & 3.16 & 23.30 & 2.42 & 17.78 & 2.58 & 18.94 & 3.86 & 28.22 & 3.60 & 26.25 \\
Math & 4.04 & 29.75 & \textbf{7.12} & \textbf{48.67} & 2.75 & 20.10 & 3.64 & 26.54 & 2.74 & 20.33 & 3.00 & 21.98 & 3.79 & 27.61 & 3.87 & 27.85 \\
Biology & 3.18 & 22.04 & 3.88 & 28.73 & \textbf{5.98} & \textbf{43.10} & 4.04 & 29.62 & 3.90 & 28.87 & 2.81 & 20.11 & 2.89 & 22.20 & 3.81 & 27.81 \\
Chemistry & 3.29 & 23.71 & 5.32 & 37.27 & 3.91 & 28.72 & \textbf{6.61} & \textbf{47.26} & 3.56 & 25.66 & 2.66 & 18.63 & 2.96 & 21.06 & 4.04 & 28.90 \\
MedicalQA & 3.56 & 25.97 & 4.18 & 32.85 & 4.24 & 30.36 & 4.21 & 31.26 & \textbf{5.95} & \textbf{43.34} & 3.10 & 22.42 & 3.52 & 26.83 & \textbf{4.11} & \textbf{30.43} \\
CNN\_DM & 2.54 & 18.78 & 2.56 & 18.13 & 2.54 & 18.39 & 2.56 & 18.66 & 2.61 & 18.50 & \textbf{5.22} & \textbf{36.18} & 2.57 & 19.16 & 2.94 & 21.11 \\
SQL & 3.58 & 20.69 & 3.54 & 25.95 & 2.42 & 17.73 & 2.80 & 20.26 & 2.43 & 17.84 & 2.75 & 20.09 & \textbf{7.25} & \textbf{52.73} & 3.54 & 25.04 \\
\bottomrule
\end{tabular}
}
\caption{Statistics of the 7 curated drafters with Qwen-3-32B as the target. \textbf{Bold} indicates the best. }
\label{tab:qwen32b_domain_drafter_results}
\end{table*}

\subsection{Evaluating HedgeSpec under the Non-Greedy Inference Setting}
\label{non-greedy}
Table~\ref{tab:hedge_spec_results_non_greedy} shows the MAT (Mean Accepted Tokens) and Token/s (token generation rate) across datasets for each method under non-greedy target model inference. The temperature is set to be 0.5 in this experiment. The trend is consistent with Table~\ref{tab:hedge_spec_results}. HedgeSpec consistently outperforms other baselines across different model sizes and domains, demonstrating improved acceptance efficiency and throughput through adaptive orchestration of expert drafters.

\begin{table*}[t!]
\centering
\footnotesize
\renewcommand{\arraystretch}{1.2}
\setlength{\tabcolsep}{3pt}
\resizebox{\textwidth}{!}{
\begin{tabular}{lcccccccccccccc|cc}
\toprule
\textbf{Datasets} & \multicolumn{2}{c}{Python} & \multicolumn{2}{c}{Math} & \multicolumn{2}{c}{Biology} & \multicolumn{2}{c}{Chemistry} & \multicolumn{2}{c}{MedQA} & \multicolumn{2}{c}{CNN\_DM} & \multicolumn{2}{c|}{SQL} &  &  \\
 \cmidrule(lr){1-1}\cmidrule(lr){2-3} \cmidrule(lr){4-5} \cmidrule(lr){6-7} \cmidrule(lr){8-9} \cmidrule(lr){10-11} \cmidrule(lr){12-13} \cmidrule(lr){14-15} \cmidrule(lr){16-17}
\textbf{Methods} & MAT & Token/s & MAT & Token/s & MAT & Token/s & MAT & Token/s & MAT & Token/s & MAT & Token/s & MAT & Token/s & \textbf{Avg MAT} & \textbf{Avg Token/s} \\
\midrule
\textit{LLaMA-3.1-8B-IT} & 1.00 & 17.77 & 1.00 & 17.97 & 1.00 & 18.03 & 1.00 & 17.35 & 1.00 & 18.81 & 1.00 & 18.41 & 1.00 & 18.25 & 1.00 & 18.08 \\
Vanilla Eagle & 6.48 & 83.75 & 5.78 & 74.52 & 5.74 & 74.41 & 5.04 & 62.61 & 4.65 & 58.82 & 5.06 & 62.55 & 5.93 & 77.99 & 5.53 & 70.66 \\
UCBSpec & 5.17 & 64.67 & 5.50 & 72.61 & 4.90 & 64.53 & 4.89 & 61.11 & 4.11 & 53.79 & 3.72 & 45.40 & 5.39 & 66.85 & 4.81 & 61.28 \\
EXP3Spec & 4.93 & 62.63 & 5.33 & 69.60 & 4.67 & 60.28 & 4.76 & 62.62 & 4.11 & 52.23 & 3.82 & 46.25 & 5.07 & 64.06 & 4.67 & 59.67 \\
\textbf{HedgeSpec} & \textbf{7.16} & \textbf{87.38} & \textbf{7.32} & \textbf{88.58} & \textbf{6.67} & \textbf{82.22} & \textbf{6.63} & \textbf{80.05} & \textbf{6.01} & \textbf{72.65} & \textbf{5.58} & \textbf{65.11} & \textbf{7.89} & \textbf{95.81} & \textbf{6.75} & \textbf{81.69} \\
\midrule
\textit{Qwen-3-8B} & 1.00 & 15.33 & 1.00 & 15.08 & 1.00 & 14.95 & 1.00 & 14.43 & 1.00 & 14.70 & 1.00 & 14.70 & 1.00 & 15.14 & 1.00 & 14.90 \\
Vanilla Eagle & 4.88 & 53.03 & 4.58 & 50.59 & 4.07 & 47.13 & 3.90 & 44.10 & 3.76 & 42.89 & 4.05 & 45.47 & 4.18 & 48.06 & 4.20 & 47.32 \\
UCBSpec & 4.45 & 50.84 & 5.25 & 56.77 & 4.07 & 45.80 & 4.54 & 47.68 & 4.12 & 41.97 & 3.52 & 39.43 & 5.19 & 56.55 & 4.45 & 48.43 \\
EXP3Spec & 4.34 & 48.00 & 5.20 & 55.19 & 3.88 & 43.07 & 4.32 & 44.75 & 3.95 & 40.96 & 3.33 & 36.94 & 4.95 & 54.09 & 4.28 & 46.14 \\
\textbf{HedgeSpec} & \textbf{5.82} & \textbf{62.63} & \textbf{7.09} & \textbf{76.90} & \textbf{5.56} & \textbf{60.80} & \textbf{6.40} & \textbf{68.77} & \textbf{5.84} & \textbf{63.67} & \textbf{4.96} & \textbf{53.50} & \textbf{7.12} & \textbf{76.75} & \textbf{6.11} & \textbf{66.15} \\
\midrule
\textit{Qwen-3-32B} & 1.00 & 8.52 & 1.00 & 8.57 & 1.00 & 8.35 & 1.00 & 8.56 & 1.00 & 8.38 & 1.00 & 8.51 & 1.00 & 8.82 & 1.00 & 8.53 \\
Vanilla Eagle & 3.03 & 22.91 & 3.36 & 25.18 & 2.52 & 19.25 & 2.99 & 22.40 & 2.55 & 18.68 & 2.58 & 19.31 & 3.06 & 23.04 & 2.87 & 21.54 \\
UCBSpec & 4.12 & 30.36 & 5.22 & 38.04 & 4.08 & 30.14 & 4.60 & 31.76 & 4.06 & 29.09 & 3.45 & 24.97 & 4.78 & 34.90 & 4.33 & 31.32 \\
EXP3Spec & 4.01 & 28.27 & 4.98 & 36.22 & 3.96 & 28.04 & 4.35 & 30.13 & 3.92 & 27.60 & 3.37 & 24.57 & 4.54 & 33.54 & 4.16 & 29.77 \\
\textbf{HedgeSpec} & \textbf{5.31} & \textbf{36.86} & \textbf{6.71} & \textbf{47.37} & \textbf{5.44} & \textbf{38.54} & \textbf{6.23} & \textbf{42.77} & \textbf{5.46} & \textbf{39.29} & \textbf{4.78} & \textbf{33.48} & \textbf{6.58} & \textbf{46.24} & \textbf{5.79} & \textbf{40.65} \\
\bottomrule
\end{tabular}
}
\caption{MAT (Mean Accepted Tokens) and Token/s (token generation rate) across datasets for each method under non-greedy target model inference. \textbf{Bold} indicates the best. The trend is consistent with results in greedy setting showing in Table~\ref{tab:hedge_spec_results}. }
\label{tab:hedge_spec_results_non_greedy}
\end{table*}

\subsection{Discussion on HedgeSpec vs. Jointly traiend drafter}
\label{joint_trained}

In this section, we study the effect of aggregating data from all domains and jointly finetuning a EAGLE model. This effectively turns the original EAGLE into another “generic” drafter. Results are presented in Table~\ref{tab:all_eagle_hedgespec}, where HedgeSpec outperforms most domains in terms of MAT. In the meantime, several additional observations emerge:

First, joint training empirically improves EAGLE’s performance across multiple domains, which is expected. Second, the jointly trained drafter performs comparably to expert drafters trained in single domain such as math, biology, chemistry, and MedQA, while we do observe performance drop in domains like python, CNN\_DM, SQL in this experiment. The reasons require further investigation, but it is also reasonable to expect this since datasets from different domains may not be fully aligned, with some pushing learning in the same direction while others pull in the opposite~\citep{liu2024rethinkingmachineunlearninglarge}; meanwhile, performance is also constrained by scaling laws and the model’s capacity to digest knowledge—particularly given that EAGLE is only a one-layer transformer. As more data is introduced, it is uncertain whether the performance in those domains will continue to boost or decline. Third, even when proper joint training can yield synergistic benefits, it is often infeasible because parties may not release their training data due to confidentiality concerns~\citep{achiam2023gpt}. In such cases, only the trained drafters might be available, making joint training impossible. This underscores HedgeSpec’s advantage: it requires no access to training data and can operate directly on a pool of expert drafters. Moreover, a generically trained model can itself serve as one candidate within this pool alongside specialized drafters.
In short, HedgeSpec addresses the higher-level challenge of orchestrating expert drafters. With only a collection of such models, it can significantly improve serving efficiency.

\begin{table}
\centering
\footnotesize
\renewcommand{\arraystretch}{1.2}
\begin{tabular}{lccccccc}
\toprule
\textbf{Datasets} & Python & Math & Biology & Chemistry & MedQA & CNN\_DM & SQL \\
\midrule
Joint trained model & 7.03 & \textbf{7.76} & 6.63 & \textbf{7.16} & 6.22 & 5.35 & 7.39 \\
Eagle     & 6.48 & 5.88 & 5.95 & 5.28 & 4.96 & 5.31 & 5.99 \\
HedgeSpec & \textbf{7.69} & 7.69 & \textbf{7.18} & 7.10 & \textbf{6.47} & \textbf{5.88} & \textbf{8.06} \\
\bottomrule
\end{tabular}
\caption{Comparison of MAT across domains for a jointly trained model, the vanilla EAGLE, and HedgeSpec. Bold indicates the best performance in each domain. }
\label{tab:all_eagle_hedgespec}
\end{table}

\subsection{Variants of Hedging Algorithmic choice}
\label{sec:hedge_variants}

In this section, we study how different hedging algorithmic choices affect the final outcome. The ablation considers two factors: (i) using token acceptance–rate loss instead of expected-length–based loss, and (ii) replacing the NormalHedge update rule with alternative algorithms, including Standard Hedge and AdaNormalHedge. HedgeSpec by default adopts the parameter-free NormalHedge with expected-length–based loss. The MAT results across datasets are reported in Table~\ref{tab:llama_hedge_ablation} and~\ref{tab:qwen_hedge_ablation}, showing consistent trends. We below show the workflow of NormalHedge in Figure~\ref{normalhedge}.

\begin{figure}[t]
\centering
\fbox{
\begin{minipage}{0.92\linewidth}
\textbf{Initially:} Set $R_{i,0} = 0$, $p_{i,1} = 1/N$ for each $i$.\\[4pt]
\textbf{For $t = 1,2,\ldots$}
\begin{enumerate}
    \item Each action $i$ incurs loss $\ell_{i,t}$.
    \item Learner incurs loss $\ell_{A,t} = \sum_{i=1}^N p_{i,t} \ell_{i,t}$.
    \item Update the cumulative regrets: $R_{i,t} = R_{i,t-1} + (\ell_{A,t} - \ell_{i,t})$ for each $i$.
    \item Find $c_t > 0$ that satisfying
    \[
        \frac{1}{N} \sum_{i=1}^N \exp\left( \frac{([R_{i,t}]_+)^2}{2c_t} \right) = e.
    \]
    \item Update distribution for round $t+1$:
    \[
        p_{i,t+1} \propto \frac{[R_{i,t}]_+}{c_t}
        \exp\left( \frac{([R_{i,t}]_+)^2}{2c_t} \right)
        \quad \text{for each } i.
    \]
\end{enumerate}
\end{minipage}
}
\caption{The workflow of NormalHedge algorithm.}
\label{normalhedge}
\end{figure}

From the table, we see that the acceptance–rate–based loss achieves comparable but slightly lower MAT, suggesting it can be a viable alternative for online learning in speculative decoding, though expected-length–based loss remains stronger overall. Switching to Standard Hedge leads to a larger drop in performance. We attribute this to Standard Hedge’s more conservative updating: it tends to spread weight more broadly during the exploration phase, whereas NormalHedge shrinks weights more aggressively toward strong drafters. This conservatism could be mitigated with careful parameter tuning, although such tuning would vary across scenarios and adds practical complexity. Finally, AdaNormalHedge also shows reduced MAT. This is somewhat surprising, as AdaNormalHedge enjoys first-order regret bounds and is theoretically stronger. In our experiments, however, this advantage did not materialize.

\begin{table}[t]
\centering
\footnotesize
\renewcommand{\arraystretch}{1.2}
\resizebox{\textwidth}{!}{
\begin{tabular}{lccccccc}
\toprule
\textbf{Datasets} & Python & Math & Biology & Chemistry & MedQA & CNN\_DM & SQL \\
\midrule
HedgeSpec & \textbf{7.69} & \textbf{7.69} & \textbf{7.18} & \textbf{7.10} & \textbf{6.47} & \textbf{5.88} & 8.06 \\
HedgeSpec w/ Acc. rate loss & 7.62 & 7.47 & 7.02 & 6.94 & 6.44 & \textbf{5.88} & \textbf{8.08} \\
HedgeSpec w/ Standard Hedge & 6.90 & 6.67 & 6.53 & 6.19 & 5.38 & 4.68 & 7.05 \\
HedgeSpec w/ AdaNormalHedge & 7.36 & 7.30 & 7.05 & 6.66 & 5.85 & 5.40 & 7.71 \\
\bottomrule
\end{tabular}
}
\caption{Llama Mean Accepted Tokens (MAT) across datasets using HedgeSpec variants. Bold indicates highest performance per column.}
\label{tab:llama_hedge_ablation}
\end{table}

\begin{table}[t]
\centering
\footnotesize
\renewcommand{\arraystretch}{1.2}
\resizebox{\textwidth}{!}{
\begin{tabular}{lccccccc}
\toprule
\textbf{Datasets} & Python & Math & Biology & Chemistry & MedQA & CNN\_DM & SQL \\
\midrule
HedgeSpec & \textbf{6.32} & \textbf{7.27} & \textbf{5.66} & \textbf{6.61} & \textbf{6.08} & \textbf{5.10} & \textbf{7.52} \\
HedgeSpec w/ Acc. rate loss & 6.23 & 7.00 & 5.58 & 6.58 & 6.05 & 5.00 & 7.30 \\
HedgeSpec w/ Standard Hedge & 5.85 & 6.64 & 5.13 & 5.88 & 5.47 & 4.46 & 7.01 \\
HedgeSpec w/ AdaNormalHedge & 6.26 & 7.12 & 5.57 & 6.39 & 5.94 & 4.99 & 7.40 \\
\bottomrule
\end{tabular}
}
\caption{Qwen-3-8B: MAT across datasets using HedgeSpec variants. Bold indicates highest per column.}
\label{tab:qwen_hedge_ablation}
\end{table}

\subsection{Additional results on cumulative regret and MAT vs. drafter pool trends}

In this section, we present additional results on cumulative regret for the Qwen-3-8B model, as well as results on MAT with an increasing number of drafters. The trends in Figure~\ref{fig:qwen_regret_mat} and Figure~\ref{fig:llama_qwen_mat} are consistent with those in Figure~\ref{fig:cumulative_regret}, demonstrating that HedgeSpec converges more quickly to near-zero regret and scales effectively as the drafter pool grows.
\label{qwen_mat}
\begin{figure}[t]
  \centering
  \includegraphics[width=0.7\linewidth]{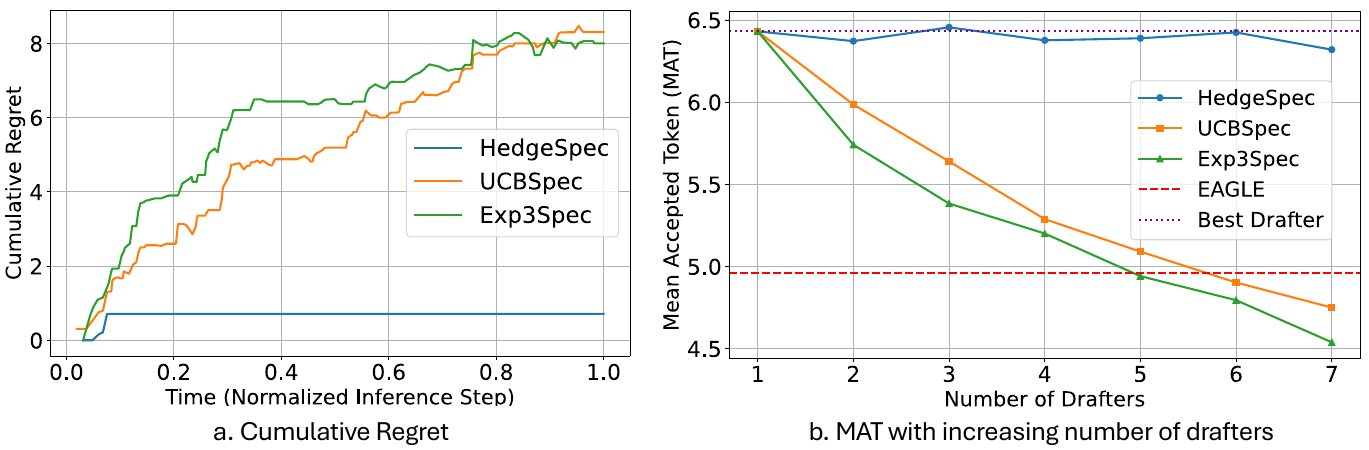}
  \caption{Cumulative regret and MAT vs. number of drafters. HedgeSpec quickly settles with near-zero regret, and can scale up with larger drafter pool. Llama results show similar trend in~\ref{fig:cumulative_regret}, highlighting our robustness.}
  \label{fig:qwen_regret_mat}
\end{figure}

\begin{figure}[t]
  \centering
  \includegraphics[width=0.7\linewidth]{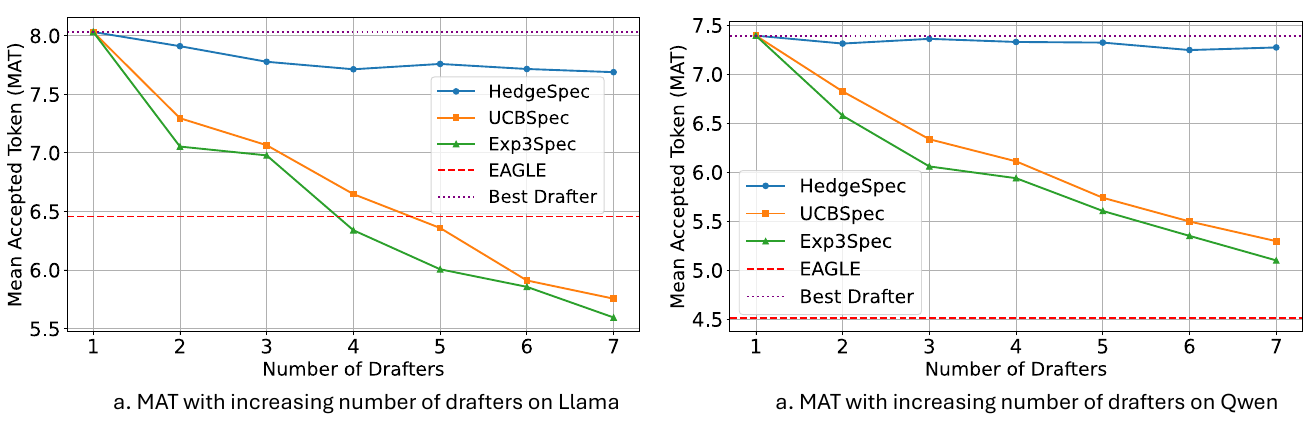}
  \caption{MAT vs. number of drafters on math workload on Llama and Qwen models. HedgeSpec scales effectively with larger drafter pools}
  \label{fig:llama_qwen_mat}
\end{figure}

\subsection{The GSM8K, HumanEval, Bird, MultiNews and SpecBench results}
\label{gsm8k}
In this section, we present MAT and tokens-per-second results on the GSM8K, HumanEval, Bird, MultiNews datasets as shown in Table~\ref{tab:gsm8k_humaneval_results}. These datasets are not included in the training data used to construct the drafters. We find that even without prior knowledge, HedgeSpec consistently outperforms all baselines across both domains. This demonstrates that HedgeSpec’s orchestration of drafters can improve MAT and throughput without requiring prior knowledge, as long as some drafters excel in specific domains—highlighting HedgeSpec’s effectiveness in drafter selection.

In Table~\ref{tab:specbench}, we provide additional results from running the SpecBench experiments. We note, however, that the experimental setup differs slightly from our main evaluation. SpecBench contains a substantial portion of tasks such as Natural Questions QA, translation, and RAG-style generation. Our drafter pool does not include experts specialized for these tasks—although the online learner can still select the “best available” drafter for each query.

However, this actually reflects the real scnario, that you could have a bunch of expert drafters, but you cannot cover all situation. In this sense, you usually have some additional "generalist" model other than the experts, like the EAGLE model, to handle the requests that outside the experts' expertise. With that being said, we added official EAGLE model in pool, and form up a 7 experts + 1 generalists pool for speculative decoding in this benchmark.

The results are shown below, we see that HedgeSpec still yeilds the best performance across different model. This experiments confirms HedgeSpec's effectiveness in selecting best drafters, regardless of OOD, for competing the best drafter in hindsight, leading to performance gain.

\begin{table}[h]
\centering
\resizebox{\textwidth}{!}{
\begin{tabular}{lcccccccc}
\toprule
\textbf{Datasets} &
\multicolumn{2}{c}{GSM8K} &
\multicolumn{2}{c}{HumanEval} &
\multicolumn{2}{c}{Bird} &
\multicolumn{2}{c}{MultiNews} \\
\cmidrule(lr){2-3} \cmidrule(lr){4-5} \cmidrule(lr){6-7} \cmidrule(lr){8-9}
\textbf{Method} &
MAT & Token/s &
MAT & Token/s &
MAT & Token/s &
MAT & Token/s \\
\midrule
Llama-3.1-8B & 1.00 & 18.26 & 1.00 & 18.67 & 1.00 & 17.95 & 1.00 & 18.26 \\
EAGLE & 6.38 & 78.32 & 6.77 & 91.98 & 5.50 & 73.24 & 5.21 & \textbf{66.10} \\
EXP3Spec & 5.03 & 63.28 & 5.24 & 68.96 & 4.38 & 58.03 & 3.65 & 47.57 \\
UCBSpec & 5.05 & 66.11 & 5.46 & 72.05 & 4.67 & 62.60 & 3.58 & 46.04 \\
\textbf{HedgeSpec} &
\textbf{6.77} & \textbf{84.32} &
\textbf{7.54} & \textbf{97.21} &
\textbf{6.67} & \textbf{83.87} &
\textbf{5.53} & 65.20 \\
\midrule
Qwen-3-8B & 1.00 & 14.82 & 1.00 & 14.71 & 1.00 & 14.91 & 1.00 & 14.89 \\
EAGLE & 5.05 & 60.59 & 4.84 & 58.81 & 4.08 & 47.91 & 4.00 & 46.81 \\
EXP3Spec & 4.90 & 53.57 & 4.15 & 48.62 & 3.73 & 43.67 & 3.21 & 36.26 \\
UCBSpec & 5.02 & 57.42 & 4.30 & 50.59 & 3.94 & 45.48 & 3.28 & 36.97 \\
\textbf{HedgeSpec} &
\textbf{6.60} & \textbf{73.59} &
\textbf{5.63} & \textbf{64.51} &
\textbf{5.18} & \textbf{56.12} &
\textbf{4.76} & \textbf{51.33} \\
\midrule
Qwen-3-32B & 1.00 & 8.24 & 1.00 & 8.32 & 1.00 & 8.43 & 1.00 & 8.34 \\
EAGLE & 3.44 & 24.86 & 2.97 & 20.35 & 2.95 & 21.50 & 2.63 & 19.06 \\
UCBSpec & 5.00 & 37.50 & 3.97 & 29.66 & 3.85 & 27.56 & 3.48 & 25.09 \\
EXP3Spec & 4.86 & 36.38 & 3.88 & 28.62 & 3.78 & 27.36 & 3.34 & 23.46 \\
\textbf{HedgeSpec} &
\textbf{6.57} & \textbf{42.37} &
\textbf{5.17} & \textbf{33.82} &
\textbf{4.92} & \textbf{34.72} &
\textbf{4.84} & \textbf{33.25} \\
\bottomrule
\end{tabular}
}
\caption{MAT (Mean Accepted Tokens) and Token/s (token generation rate) across GSM8K, HumanEval, Bird and MultiNews, which is not part of the training datasets for the drafters. \textbf{Bold} indicates the best. Additional results regarding SpecBench is shown in Table~\ref{tab:specbench}.Consistent with Table~\ref{tab:hedge_spec_results}, HedgeSpec consistently outperforms all baselines across those two domains. Its shows that its orchestration of drafters improves MAT and throughput without prior knowledge, as long as there exists drafters excel in certain domain. }
\label{tab:gsm8k_humaneval_results}
\end{table}

\begin{table}[h]
\centering
\resizebox{0.85\textwidth}{!}{
\begin{tabular}{lcccccccccccc}
\toprule
& \multicolumn{4}{c}{\textbf{Llama-3.1}} 
& \multicolumn{4}{c}{\textbf{Qwen-3-8B}}
& \multicolumn{4}{c}{\textbf{Qwen-3-32B}} \\
\cmidrule(lr){2-5} \cmidrule(lr){6-9} \cmidrule(lr){10-13}
& Vanilla & EXP3 & UCB & Hedge 
& Vanilla & EXP3 & UCB & Hedge 
& Vanilla & EXP3 & UCB & Hedge \\
\midrule
\textbf{MAT} 
& 1.00 & 4.04 & 4.05 & \textbf{5.31}
& 1.00 & 3.58 & 3.65 & \textbf{4.70}
& 1.00 & 3.43 & 3.38 & \textbf{3.97} \\
\textbf{Token/s} 
& 18.24 & 52.19 & 52.68 & \textbf{60.80}
& 14.32 & 41.02 & 41.23 & \textbf{50.70}
& 8.12 & 24.86 & 24.24 & \textbf{27.73} \\
\bottomrule
\end{tabular}
}
\caption{MAT and Token/s on SpecBench across different models (Llama-3.1, Qwen-3-8B, Qwen-3-32B) under various speculative decoding strategies. Best results are \textbf{bolded}.}
\label{tab:specbench}
\end{table}

\subsection{Discussion of HedgeSpec on broader speculative decoding schemes}
\label{broader}
\begin{table}[t]
\centering
\resizebox{\textwidth}{!}{
\begin{tabular}{lcccccccc}
\toprule
\textbf{Model} &
\multicolumn{2}{c}{\textbf{HumanEval}} &
\multicolumn{2}{c}{\textbf{GSM8K}} &
\multicolumn{2}{c}{\textbf{MT-Bench}} &
\multicolumn{2}{c}{\textbf{Alpaca}} \\
\cmidrule(lr){2-3}
\cmidrule(lr){4-5}
\cmidrule(lr){6-7}
\cmidrule(lr){8-9}
 & MAT & Token/s & MAT & Token/s & MAT & Token/s & MAT & Token/s \\
\midrule
Vicuna-7B-v1.5 & 1.00 & 33.59 & 1.00 & 33.49 & 1.00 & 33.48 & 1.00 & 33.31 \\
UCBSpec & 1.45 & 44.60 & 1.38 & 42.33 & 1.10 & 37.86 & 1.38 & 43.85 \\
Exp3Spec & 1.43 & 43.55 & 1.36 & 41.99 & 1.11 & 38.40 & 1.44 & 46.63 \\
\textbf{HedgeSpec} & \textbf{1.88} & \textbf{55.27} & \textbf{1.62} & \textbf{46.58} & \textbf{1.28} & \textbf{41.42} & \textbf{1.69} & \textbf{52.87} \\
\bottomrule
\end{tabular}
}
\caption{MAT and Token/s across HumanEval, GSM8K, MT-Bench, and Alpaca for HedgeSpec comparing with other Bandit-based method on REST framework. \textbf{Bold} indicates the best.}
\label{tab:rest_result}
\end{table}

In this section, we extend our discussion on the integration of HedgeSpec into broader speculative decoding schemes. We do believe HedgeSpec is integrable for different speculative decoding frameworks. At its core, HedgeSpec operates at an orthogonal layer: any speculative decoding algorithm could serve as a potential drafters in the pool. Theoretically, as long as we have a verified trace (Theorem~\ref{thm:unbiased_len_estimate}), and can use it to get the token acceptance probability for EAL evalaution on other drafters, HedgeSpec can select among them with provable no-regret guarantees.

In our main experiments, we build upon EAGLE framework, which is widely regarded as the most broadly deployed framework~\citep{radhatai, tang2025efficient}. Across benchmarks, EAGLE showS its SOTA performance over well-established speculative framework (i.e. Medusa, Rest, PLD). We believe that integrating HedgeSpec with EAGLE provides a strong and representative demonstration of our method’s effectiveness. Below, we further discuss how HedgeSpec can be integrated into these other frameworks as well.

Regarding PLD (Prompt-Lookup-Decoding)~\citep{saxena2023prompt}: PLD utilizes the input prompt to guess the generation. In some tasks with input grounded generation, there are high chances of overlap between input and output. As a result, PLD uses only a single surrogated drafter. Therefore, PLD is not the best candidates for the multi-drafter framework.

Regarding Medusa~\citep{cai2024medusa}: Medusa is a well-known speculative decoding method. It utilizes multiple Medusa heads to predict the future tokens. These heads require training in order to align with the target task distribution. A potential integration with Medusa would follow a procedure similar to our EAGLE setup, that it obtains logits from the different set of expert Medusa heads and use them for full-information evaluation. This allows HedgeSpec to orchestrate the Medusa experts during speculative drafting.

Regarding REST (Retrieval-Based Speculative Decoding)~\citep{he2023rest}: REST is another well-known speculative decoding framework. It treats different datastore as the surrogate drafters, generating future tokens by retrieving and composing relevant content from a reservoir of existing knowledge. To fit into a practical multi-drafter selection framework, we can construct multiple such expert reservoirs, each specializing in a particular domain for token drafting. To integrate with HedgeSpec, we can calculate the token acceptance probability (EAL) from each drafter with the verified trace, and use it to select the best drafter fitting the requests.

We integrate HedgeSpec into the REST framework. Following the original REST setup, we use Vicuna-7B-v1.5 as the target model. The expert drafters in this experiment include Python, Math, CNN-DM, MedQA, Alpaca, SQL, and Biology. Our goal remains the same: given a pool of diverse drafters, determine how to select the best drafter for each incoming task. We benchmark performance on HumanEval, GSM8K, MT-Bench, and Alpaca, with results shown in Table~\ref{tab:rest_result}. Similar to our findings under the EAGLE framework, HedgeSpec again achieves the best performance. This demonstrates the advantage of leveraging full-information feedback in drafter selection, allowing HedgeSpec to adapt to the best drafter in hindsight and ultimately improving overall efficiency.

\section{Ethics statement}

This paper presents work with the goal of advancing machine learning. There are many potential societal consequences of our work, none of which we feel must be specifically highlighted here.

\section{Reproducibility statement}

The models and datasets used in this paper are fully open-sourced, with specifications provided in Sections~\ref{baseline} and~\ref{drafters}. Additional configuration details and implementation specifics are referenced in the main text and included in Appendix~\ref{practical},~\ref{EAGLE_CONFIG}, and~\ref{sec:bert_config}. For theoretical contributions, complete proofs of the theorems are provided in Appendix~\ref{sec:proofs}. Together, these materials ensure the reproducibility of our results.

\end{document}